\documentclass{article}

\usepackage{microtype}
\usepackage{graphicx}
\usepackage{booktabs}
\usepackage{enumitem}
\usepackage{pgfplots}
\usepackage{subcaption}
\usepackage{hyperref}
\usepackage{wrapfig}

\usepackage[accepted]{icml2025}

\usepackage{amsthm}
\usepackage{amsmath}
\usepackage{amssymb}
\usepackage{mathtools}
\usepackage{amsthm}
\usepackage{bm}

\usepackage[capitalize,noabbrev]{cleveref}

\theoremstyle{plain}
\newtheorem{theorem}{Theorem}[section]

\newtheorem{lemma}[theorem]{Lemma}

\theoremstyle{definition}
\newtheorem{definition}[theorem]{Definition}

\theoremstyle{remark}

\usepackage[textsize=tiny]{todonotes}

\newcommand{\norm}[1]{\left\lVert#1\right\rVert}
\newcommand{\abs}[1]{\lvert#1\rvert}

\usepackage{url}
\usepackage{array}
\usepackage{amsfonts}
\usepackage{nicefrac}
\usepackage{microtype}
\usepackage{xcolor}

\icmltitlerunning{On the Importance of Gaussianizing Representations}

\begin{document}

\twocolumn[
\icmltitle{On the Importance of Gaussianizing Representations}

\begin{icmlauthorlist}
\icmlauthor{Daniel Eftekhari}{csc,vec}
\icmlauthor{Vardan Papyan}{mat,csc,vec}
\end{icmlauthorlist}

\icmlaffiliation{csc}{Department of Computer Science, University of Toronto, Toronto, Canada}
\icmlaffiliation{vec}{Vector Institute, Toronto, Canada}
\icmlaffiliation{mat}{Department of Mathematics, University of Toronto, Toronto, Canada}

\icmlcorrespondingauthor{Daniel Eftekhari}{defte@cs.toronto.edu}

\icmlkeywords{mutual information game, gaussianization, power transform, noise robustness, normalization layer, deep learning, information theory}

\vskip 0.3in
]

\printAffiliationsAndNotice{}

\begin{abstract}
\label{abstract}
The normal distribution plays a central role in information theory --
it is at the same time the best-case signal and worst-case noise distribution,
has the greatest representational capacity of any distribution,
and offers an equivalence between uncorrelatedness and independence for joint distributions.
Accounting for the
mean and variance of activations throughout the layers of
deep neural networks
has had a significant effect
on facilitating their effective training,
but seldom has
a
prescription
for
precisely
what
distribution these activations should take,
and how this might be achieved, been
offered.
Motivated by the information-theoretic properties of the normal distribution,
we address
this question
and concurrently present
normality normalization: a novel
normalization
layer
which
encourages normality in the feature representations of
neural networks
using the power transform
and
employs
additive Gaussian noise
during
training.
Our experiments comprehensively demonstrate the effectiveness of normality normalization,
in regards to its generalization performance on an array of widely used model and dataset combinations,
its
strong performance across
various common factors of variation
such as model
width, depth, and training minibatch size,
its suitability
for usage
wherever existing normalization layers are conventionally used,
and as a means to
improving
model robustness to random perturbations.
\end{abstract}

\section{Introduction}
\label{introduction}
\pdfbookmark[1]{Introduction}{bookmark-introduction}
The normal distribution is unique -- information theory shows that among all distributions with the same mean and variance, a signal following this distribution encodes the maximal amount of information \citep{6773024}.
This can be viewed as a desirable property
in learning systems such as neural networks,
where the activations of successive layers equivocates to successive representations of the data.

Moreover,
a signal following the normal distribution is maximally robust to random perturbations \citep{10.5555/1146355}, and thus presents a desirable property for the representations of learning systems; especially deep neural networks, which are susceptible to
random \citep{Ford2019AdversarialEA}
and adversarial \citep{DBLP:journals/corr/SzegedyZSBEGF13} perturbations.
Concomitantly,
the normal distribution is
information-theoretically
the
worst-case perturbative noise distribution
\citep{10.5555/1146355},
which
suggests models
gaining
robustness
to Gaussian noise
should
be robust
to any
other
form of random perturbations.

We show that encouraging deep learning models to encode their activations using the normal distribution
in conjunction with
applying additive Gaussian noise
during training,
helps improve generalization.
We do so by means of a novel layer -- normality normalization -- so-named because
it applies the
power transform, a technique used to gaussianize data \citep{b6d53586-2890-3ac6-bec5-c3cfdcb64879,Yeo2000},
and because it can be viewed as
an
augmentation
of
existing normalization techniques such as batch \citep{pmlr-v37-ioffe15},
layer \citep{DBLP:journals/corr/BaKH16},
instance \citep{Ulyanov2016InstanceNT},
and group \citep{Wu2018ECCV}
normalization.

Our experiments comprehensively demonstrate the general effectiveness of normality normalization,
in terms of
its generalization performance,
its
strong
performance
across
various common factors of variation
such as model
width, depth, and training minibatch size,
which furthermore
serve to
highlight
why
it
is effective,
its suitability
for usage
wherever existing normalization layers are conventionally used,
and its
effect on improving model robustness under random perturbations.

In Section \ref{motivation}
we
outline some of the desirable properties normality can imbue in learning models,
which serve as motivating factors for the development of normality normalization.
In Section \ref{background}
we provide
a brief background on the power transform, before presenting normality normalization in Section \ref{normality-normalization}.
In Section \ref{experiments} we describe
our experiments, analyze the results, and explore some of the properties of models trained with normality normalization.
In Section \ref{related-work}
we comment on related work and discuss some possible future directions.
Finally in Section \ref{conclusion} we contextualize normality normalization in the broader deep learning literature, and provide a few concluding remarks.

\pdfbookmark[1]{Motivation}{bookmark-motivation}
\section{Motivation}
\label{motivation}
In this section we present
motivating factors
for
encouraging normality in feature representations in conjunction with using additive random noise during learning.
Section \ref{experiments}
substantiates
the applicability of the motivation through the experimental results.

\pdfbookmark[2]{Mutual Information Game \& Noise Robustness}{bookmark-mutual-information-game}
\subsection{Mutual Information Game \& Noise Robustness}
\label{mutual-information-game}

\pdfbookmark[3]{Overview of the Framework}{bookmark-mutual-information-game-overview}
\subsubsection
{Overview of the Framework}
Under first and second moment constraints,
the normal distribution is at the same time the best possible signal distribution, and the worst possible noise distribution; a result which can be studied in the context of the Gaussian channel \citep{6773024}, and through the lens of the mutual information game \citep{10.5555/1146355}.
In this framework,
$X$ and $Z$ denote
two independent random variables,
representing
the
input
signal and noise, and
$Y = X + Z$
is the output.
The mutual information between $X$ and $Y$ is denoted by $I\left(X;Y\right)$; $X$ tries to maximize this term, while $Z$ tries to minimize it. Both $X$ and $Z$ can encode their signal using any probability distribution, so that their respective objectives are optimized
for.

Information theory answers the question of what distribution $X$ should choose to maximize $I\left(X;Y\right)$. It also answers the question of what distribution $Z$ should choose to minimize $I\left(X;Y\right)$.
As shown by the following theorem,
remarkably
the answer to both questions is the same -- the normal distribution.

\begin{theorem}\citep{10.5555/1146355}
Mutual Information Game.
Let $X$, $Z$ be independent, continuous random variables with non-zero support over the entire real line, and satisfying the moment conditions $E\left[X\right]=\mu_{x}$, $E\left[X^{2}\right]=\mu_{x}^{2}+\sigma_{x}^{2}$ and $E\left[Z\right]=\mu_{z}$, $E\left[Z^{2}\right]=\mu_{z}^{2}+\sigma_{z}^{2}$. Further let $X^{*}$, $Z^{*}$ be normally distributed random variables satisfying the same moment conditions, respectively. Then the following series of inequalities holds
\begin{equation}
\begin{aligned}
	&I\left(X\phantom{*};X\phantom{*}+Z^{*}\right)\le\\
	&I\left(X^{*};X^{*}+Z^{*}\right)\le\\
	&I\left(X^{*};X^{*}+Z\phantom{*}\right).
\end{aligned}
\label{mig-inequalities}
\end{equation}
\end{theorem}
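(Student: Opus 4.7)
The plan is to reduce each mutual information to a difference of differential entropies and then handle the two inequalities separately. Since $X$ and $Z$ are independent, $h(X+Z \mid X) = h(Z)$, so $I(X;X+Z) = h(X+Z) - h(Z)$; applying this identity to each quantity in \eqref{mig-inequalities} reduces the problem to comparing entropies of sums of independent random variables with prescribed first and second moments. Note that differential entropy is translation-invariant, so the means $\mu_x,\mu_z$ play no role and one may as well assume all variables are centered.

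For the left inequality $I(X; X+Z^*) \le I(X^*; X^*+Z^*)$, the noise entropy $h(Z^*)$ is common to both sides, so it suffices to show $h(X+Z^*) \le h(X^*+Z^*)$. By independence, both sums have variance $\sigma_x^2 + \sigma_z^2$, and since $X^*+Z^*$ is Gaussian, the maximum-entropy property of the Gaussian under a second-moment constraint yields the inequality immediately. This step is essentially elementary.

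For the right inequality $I(X^*; X^*+Z^*) \le I(X^*; X^*+Z)$, both entropy terms depend on the unknown $Z$, so a more delicate tool is required. The plan is to invoke the entropy power inequality: independence of $X^*$ and $Z$ gives $e^{2h(X^*+Z)} \ge e^{2h(X^*)} + e^{2h(Z)}$, which rearranges to $h(X^*+Z) - h(Z) \ge \tfrac{1}{2}\log\!\bigl(1 + e^{2h(X^*)-2h(Z)}\bigr)$. Substituting $e^{2h(X^*)} = 2\pi e\,\sigma_x^2$ and the max-entropy bound $e^{2h(Z)} \le 2\pi e\,\sigma_z^2$ then yields $I(X^*;X^*+Z) \ge \tfrac{1}{2}\log(1 + \sigma_x^2/\sigma_z^2) = I(X^*;X^*+Z^*)$, as desired.

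The main obstacle is this last step: the EPI lower bound on $h(X^*+Z)$ and the max-entropy upper bound on $h(Z)$ pull in opposite directions inside $h(X^*+Z) - h(Z)$, and their combination is only tight because both become equalities simultaneously precisely when $Z$ is Gaussian — which is exactly what allows equality throughout \eqref{mig-inequalities}. The entropy power inequality itself is a deep ingredient that I would cite as a black box rather than re-prove.
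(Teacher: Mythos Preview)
Your argument is correct and is precisely the standard Cover--Thomas proof the paper defers to. One remark: the paper's sketch attributes the entropy power inequality to the \emph{first} inequality and maximum entropy to the \emph{second}, but your assignment is the right one --- the first inequality reduces to $h(X+Z^{*})\le h(X^{*}+Z^{*})$, which is immediate from the Gaussian maximum-entropy property under a variance constraint (EPI would point the wrong way here), whereas the second inequality is the step that genuinely needs the entropy power inequality combined with $h(Z)\le h(Z^{*})$.
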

\begin{proof}
Without loss of generality let $\mu_{x}=0$ and $\mu_{z}=0$. The first inequality hinges on the entropy power inequality. The second inequality hinges on the maximum entropy of the normal distribution given first and second moment constraints. See \citet{10.5555/1146355} for details.
\label{mig-theorem}
\end{proof}
This leads to the following minimax formulation of the game
\begin{equation}
\begin{aligned}
	\min\limits_{Z}\max\limits_{X}I\left(X;X+Z\right)
	= \max\limits_{X}\min\limits_{Z}I\left(X;X+Z\right),
\end{aligned}
\label{mig-minimax}
\end{equation}
which implies that any deviation from normality, for $X$ or $Z$, is suboptimal from that player's perspective.

\pdfbookmark[3]{Relation to Learning}{bookmark-mutual-information-game-learning}
\subsubsection{Relation to Learning}
How might this framework relate to the learning setting?
First, previous works have shown
that
adding noise to the inputs \citep{6796505} or to the intermediate activations \citep{Srivastava2014DropoutAS} of neural networks can be an effective form of regularization, leading to better generalization.
Moreover,
the mutual information game
shows that,
among encoding distributions
with first and second moment constraints,\footnote{Note that conventional normalization layers respect precisely these constraints, since the mean and variance are used to normalize the pre-activations.}
the normal distribution is maximally robust to random perturbations.
Taken together these suggest that
encoding activations using the normal distribution is the most effective way of using noise as a regularizer, because a greater degree of regularizing noise in the activations can be tolerated
for the same level of corruption.

Second, the mutual information game suggests gaining robustness to Gaussian noise
is optimal because
it is the worst-case noise distribution.
This suggests adding Gaussian noise -- specifically -- to
activations
during training
should have the strongest regularizing effect. Moreover, gaining robustness to noise has previously been demonstrated
to
imply
better generalization \citep{Arora2018StrongerGB}.

Finally, there exists a close correspondence between the mutual information between the input and the output of a channel subject to additive Gaussian noise, and the minimum mean-squared error (MMSE) in estimating
the input given the output \citep{1412024}. This suggests that when Gaussian noise is added to a given layer's activations, quantifying the attenuation of the noise across the subsequent layers of the network, as measured by the
mean-squared error (MSE)
relative to the unperturbed activations, provides
a
measurable proxy for the mutual information between the activations of successive layers
in the presence of noise.

\pdfbookmark[2]{Maximal Representation Capacity and Maximally Compact Representations}{bookmark-maximally-compact-representations}
\subsection{Maximal Representation Capacity and Maximally Compact Representations}
\label{maximally-compact-representations}
The entropy of a random variable is a measure of
the number of bits it can encode \citep{6773024},
and therefore
of its representational capacity \citep{10.5555/1146355}.
The normal distribution is the maximum entropy distribution for specified mean and variance. This
suggests
that a unit which encodes features using the normal distribution has maximal representation capacity given a fixed variance budget, and therefore encodes information as compactly as possible.
This may then suggest that it is efficient for a unit (and by extension layer) to encode its activations using the normal distribution.

\pdfbookmark[2]{Maximally Independent Representations}{bookmark-maximally-independent}
\subsection{Maximally Independent Representations}
\label{text-maximally-independent}
Previous work
has explored the beneficial effects of decorrelating features
in
neural networks \citep{Huang2018DecorrelatedBN,Huang2019IterativeNB,Pan2019ICCV}.
Furthermore,
other works have shown
that
preventing
feature
co-adaptation
is
beneficial for training deep neural networks \citep{Hinton2012ImprovingNN}.

For any set of random variables, for example representing the pre-activation values of various units in a neural network layer,
uncorrelatedness
does not imply independence in general. But for random variables whose marginals are normally distributed,
then as shown by Lemma
\ref{lemma-minimal-mi-given-marginals},
uncorrelatedness does imply independence when they are furthermore jointly normally distributed.
Furthermore, for any given (in general, non-zero) degree of correlation between the random variables,
they are maximally independent -- relative to any other possible joint distribution -- when they are jointly normally distributed.

We use these results to motivate the following argument:
for a given level of correlation,
encouraging normality in the feature representations of units
would lead to
the desirable property of
maximal independence between them;
in the setting where increased
unit-wise normality also lends itself to increased joint normality.

\pdfbookmark[1]{Background: Power Transform}{bookmark-background}
\section{Background: Power Transform}
\label{background}
Before introducing normality normalization, we
briefly
outline the power transform \citep{Yeo2000}
which
our proposed normalization layer
employs.
Appendix \ref{appendix-nll-derivation} provides the
complete
derivation of the negative log-likelihood (NLL) objective function presented below.

Consider a random variable $H$
from which a sample
$\bm{h} = \left\{h_{i}\right\}_{i=1}^{N}$ is obtained.\footnote{In the context of normalization layers, $N$ represents the number of samples being normalized;
for example in batch normalization, $N = BHW$ for convolutional layers, where $B$ is the minibatch size, and $H,W$ are respectively the height and width of the activation.}
The power transform gaussianizes $\bm{h}$ by applying the
following
function
for each $h_{i}$:
\begin{equation}
\begin{aligned}
	\psi\left(h; \lambda\right) &=
	\begin{cases}
	\frac{1}{\lambda}\left(\left(1+h\right)^{\lambda} - 1\right), & h \ge 0, \lambda \ne 0\\
	\log{\left(1+h\right)}, & h \ge 0, \lambda = 0\\
	\frac{-1}{2 - \lambda}\left(\left(1-h\right)^{2 - \lambda} - 1\right), & h < 0, \lambda \ne 2\\
	-\log{\left(1-h\right)}, & h < 0, \lambda = 2
	\end{cases}.
\end{aligned}
\label{transform}
\end{equation}
The parameter $\lambda$ is obtained using maximum likelihood estimation,
so that the transformed variable is
as normally distributed as possible,
by minimizing the following
NLL:\footnote{To simplify the presentation,
we momentarily defer the cases \(\lambda=0\) and \(\lambda=2\),
and outline the NLL for \(h \ge 0\) only, as the case for \(h < 0\) follows closely by symmetry.
}
\begin{equation}
	\begin{aligned}
		\mathcal{L}\left(\bm{h};\lambda\right)
		&=\frac{1}{2}\left(\log\left(2\pi\right)+1\right) + \frac{1}{2}\log\left(\hat{\sigma}^2\left(\lambda\right)\right)\\
		&- \frac{\lambda-1}{N} \sum_{i=1}^{N}\log\left(1+h_{i}\right),
	\end{aligned}
\label{pll_yj_x>0}
\end{equation}
where
$\hat{\mu}\left(\lambda\right)=\frac{1}{N}\sum_{i=1}^{N}\psi\left(h_{i}; \lambda\right)$ and $\hat{\sigma}^{2}\left(\lambda\right)=\frac{1}{N}\sum_{i=1}^{N}\left(\psi\left(h_{i}; \lambda\right)-\hat{\mu}\left(\lambda\right)\right)^{2}$.

\pdfbookmark[1]{Normality Normalization}{bookmark-normality-normalization}
\section{Normality Normalization}
\label{normality-normalization}
To gaussianize a unit's pre-activations
$\bm{h}$,
normality normalization
estimates \(\hat{\lambda}\)
using the method we
present
in Subsection \ref{lambda-estimate},
and
then
applies the power transform given by Equation \ref{transform}.
It
subsequently
adds Gaussian noise with scaling
as described in
Subsection \ref{perturbative-noise}.
These
steps
are
done between
the normalization and affine transformation steps conventionally performed in other normalization layers.

\pdfbookmark[2]{Estimate of \texorpdfstring{$\hat{\lambda}$}{λ̂}}{bookmark-lambda-estimate}
\subsection{Estimate of $\hat{\lambda}$}
\label{lambda-estimate}
Differentiating Equation \ref{pll_yj_x>0} w.r.t. \(\lambda\)
and
setting the resulting expression to \(0\) does not lead to a closed-form solution for \(\hat{\lambda}\), which suggests an iterative
method for its estimation; for example gradient descent, or a root-finding algorithm \citep{Brent1971AnAW}.
However,
motivated by
the NLL's convexity in \(\lambda\) \citep{Yeo2000}, we use a
quadratic
series
expansion
for its approximation,
which we outline in Appendix \ref{series-expansion-loss}.

With the quadratic form of the NLL,
we can estimate \(\hat{\lambda}\) with one step of the Newton-Raphson method:
\begin{equation}
\begin{aligned}
	\hat{\lambda} &= 1 -
	\frac{\mathcal{L}'\!\left(\bm{h};\lambda=1\right)}{\mathcal{L}''\!\left(\bm{h};\lambda=1\right)},
\end{aligned}
\label{newton-raphson-update}
\end{equation}
where the series expansion has been taken around\footnote{The previously deferred cases of $\lambda=0$ and $\lambda=2$ are thus inconsequential, in the context of computing an estimate $\hat{\lambda}$, by continuity of the quadratic form of the series expansion for the NLL. However, these two cases still need to be considered when applying the transformation function itself.}
\(\lambda_{0}=1\).
The expressions for
\(\mathcal{L}'\!\left(\bm{h};\lambda=1\right)\) and \(\mathcal{L}''\!\left(\bm{h};\lambda=1\right)\) are outlined in Appendix \ref{series-expansion-loss}.

Appendix \ref{estimation-lambda} provides
empirical evidence
substantiating
the similarity between the NLL and its second-order series expansion around \(\lambda_{0}=1\),
and
furthermore demonstrates the accuracy
of obtaining the estimates \(\hat{\lambda}\) using one step of the Newton-Raphson method.

Subsequent
to estimating \(\hat{\lambda}\),
the power transform is applied to each of the pre-activations
to obtain
$x_{i} = \psi\left(h_{i};\hat{\lambda}\right)$.

We next discuss a few facets of the method.
\paragraph{Justification for the Second Order Method}The justification for using the Newton-Raphson method for computing $\hat{\lambda}$ is as follows:
\begin{itemize}[itemsep=4pt, parsep=0pt, topsep=0pt, left=8pt]
\item A first-order gradient-based method would require iterative refinements to its estimates of $\hat{\lambda}$ in order to find the minima,
which would significantly affect
runtime. In contrast, the Newton-Raphson method is guaranteed to find the minima
of the quadratic loss
in one step.
\item A first-order gradient-based method for computing $\hat{\lambda}$ would require an additional hyperparameter for the step size.
Due to the quadratic nature of the
loss,
the Newton-Raphson method necessarily does not require any such additional hyperparameter.
\item The minibatch statistics $\hat{\mu}$ and $\hat{\sigma}^{2}$ are available in closed-form. It is
therefore
natural to seek a closed-form expression for $\hat{\lambda}$, which is
facilitated
by using the Newton-Raphson method.
\end{itemize}

\paragraph{Location of Series Expansion}The choice of taking the series expansion around \(\lambda_{0}=1\) is justified using the following two complementary factors:
\begin{itemize}[itemsep=4pt, parsep=0pt, topsep=0pt, left=8pt]
\item \(\hat{\lambda}=1\) corresponds to the identity transformation, and hence having \(\lambda_{0}=1\) as the point
where
the series expansion is taken, facilitates its recovery if this is optimal.
\item It equivocates to assuming the least about the nature of the deviations from normality in the sample statistics, since it avoids biasing the form of the series expansion for the loss towards solutions favoring \(\hat{\lambda} < 1\) or \(\hat{\lambda} > 1\).
\end{itemize}

\paragraph{Order of Normalization and Power Transform Steps}
Applying the power transform
after
the
normalization
step
is beneficial,
because having zero mean and unit variance activations
simplifies several terms
in the computation of
\(\hat{\lambda}\),
as shown in Appendix \ref{series-expansion-loss},
and improves
numerical stability.

\paragraph{No Additional Learned Parameters}Despite having increased normality in the features, this came at no additional cost in terms of the number of learnable parameters relative to
existing normalization
techniques.

\paragraph{Test Time}
In the case where normality normalization is used to augment batch normalization,
in addition to computing global estimates for $\mu$ and $\sigma^{2}$,
we additionally compute a
global estimate
for \(\lambda\).
These are obtained using the respective training set running averages for these terms,
analogously
with
batch normalization.
At test time, these global estimates $\mu, \sigma^{2}, \lambda$ are used, rather than the test minibatch statistics
themselves.

\pdfbookmark[2]{Additive Gaussian Noise with Scaling}{bookmark-perturbative-noise}
\subsection{Additive Gaussian Noise with Scaling}
\label{perturbative-noise}
Normality normalization applies regularizing additive random noise to the output of the power transform; a step
which is
also
motivated
through the information-theoretic principles
described in
Subsection \ref{mutual-information-game},
and
whose
regularizing
effect is magnified by
having
gaussianized pre-activations.

For
each input indexed by $i \in \left\{1, \ldots, N\right\}$, during training\footnote{
We do not apply additive random noise with scaling at test time.
} we have $y_{i} = x_{i} + z_{i} \cdot \xi \cdot s$, where $x_{i}$ is the $i$-th input's post-power transform value, $z_{i} \sim \mathcal{N}\left(0, 1\right)$, $\xi \ge 0$ is the noise factor, and $s = \frac{1}{N}\norm{\bm{x} - \bar{\bm{x}}}_{1}$
represents the
zero-centered norm of the post-power transform values,
normalized
by
the sample
size
$N$.

Importantly,
scaling each of the sampled noise values $z_{i}$ for a given channel's minibatch\footnote{For clarity the present
discussion assumes the case where normality normalization is used to augment batch normalization. However, the discussion applies
equally
to other normalization layers,
such as layer, instance, and group normalization.}
by
the
channel-specific scaling factor $s$,
leads to
an appropriate
degree of additive noise
for
each of the
channel's
constituent terms $x_{i}$.
This is significant because
for a given minibatch,
each channel's norm will
differ from the norms of other
channels.

Furthermore,
we treat $s$ as a constant, so that its constituent terms are not incorporated during backpropagation.\footnote{Implementationally, this is done by disabling gradient tracking when computing these terms.}
This is significant because the purpose of $s$ is
to scale the additive random noise by the minibatch's
statistics, and not for it to contribute to learning directly by affecting the gradients of
the
constituent terms.

Note that we employ the \(\ell_{1}\)-norm for \(\bm{x}\) rather than the \(\ell_{2}\)-norm
because it lends itself to a more robust measure of dispersion \citep{PHAMGIA2001921}.\\

Algorithm \ref{algorithm1} provides a summary of normality normalization.

\begin{algorithm}[h!]
    \textbf{Input: }{$\bm{u} = \left\{u_{i}\right\}_{i=1}^{N}$}
    \vspace{0.2em}\\
    \textbf{Output: }{$\bm{v} = \left\{v_{i}\right\}_{i=1}^{N}$}
    \vspace{0.2em}\\
    \textbf{Learnable Parameters: }$\gamma, \beta$
    \vspace{0.2em}\\
    \textbf{Noise Factor: }$\xi \ge 0$
    \vspace{0.8em}\\
    \textbf{Normalization}:
    \vspace{0.25em}\\
    $\hat{\mu} = \frac{1}{N}\sum_{i=1}^{N}u_{i}$
    \vspace{0.25em}\\
    $\hat{\sigma}^{2} = \frac{1}{N}\sum_{i=1}^{N}\left(u_{i}-\hat{\mu}\right)^{2}$
    \vspace{0.25em}\\
    $h_{i} = \frac{u_{i}-\hat{\mu}}{\sqrt{\hat{\sigma}^{2}+\epsilon}}$
    \vspace{0.8em}\\
    \textbf{Power Transform and\\
    Scaled Additive Noise}:
    \vspace{0.35em}\\
    $\hat{\lambda} = 1 - \frac{\mathcal{L}'\left(\bm{h};\lambda=1\right)}{\mathcal{L}''\left(\bm{h};\lambda=1\right)}$
	\vspace{0.25em}\\
    $x_{i} = \psi\left(h_{i};\hat{\lambda}\right)$
    \vspace{0.25em}\\
    \vspace{0.25em}
	with gradient tracking disabled:
	\vspace{0.25em}\\
	\phantom{0000}$\bar{x} = \frac{1}{N}\sum_{i=1}^{N}x_{i}$
	\vspace{0.25em}\\
    \phantom{0000}$s = \frac{1}{N}\sum_{i=1}^{N}\abs{x_{i} - \bar{x}}$
    \vspace{0.45em}\\
    sample $z_{i} \sim \mathcal{N}\left(0,1\right)$
    \vspace{0.25em}\\
    $y_{i} = x_{i} + z_{i} \cdot \xi \cdot s$
    \vspace{0.8em}\\
    \textbf{Affine Transform}:
    \vspace{0.25em}\\
    $v_{i} = \gamma \cdot y_{i} + \beta$
	\caption{Normality Normalization}
	\label{algorithm1}
\end{algorithm}

\pdfbookmark[1]{Experimental Results \& Analysis}{bookmark-experiments}
\section{Experimental Results \& Analysis}
\label{experiments}

\pdfbookmark[2]{Experimental Setup}{bookmark-training-details}
\subsection{Experimental Setup}
\label{training-details}
For each model and dataset combination,
\(M=6\) models were
trained,
each with differing random initializations for the model parameters.
Wherever a result is reported numerically, it is obtained using the mean performance and one standard error from the mean
across the $M$ runs.
The best performing models for a given dataset and model combination
are shown in bold.
Wherever a result is shown graphically,
it is
displayed
using the mean performance,
and
its
$95$\% confidence interval
when
applicable.
The training configurations of the models\footnote{Code is made
available at
\url{https://github.com/DanielEftekhari/normality-normalization}.}
are outlined in Appendix \ref{appendix-training-details}.

\pdfbookmark[2]{Generalization Performance}{bookmark-performance-evaluation}
\subsection{Generalization Performance}
\label{performance-evaluation}
We evaluate
layer normality normalization (LayerNormalNorm)
and
layer normalization (LayerNorm)
on a variety of models and datasets,
as shown in Table \ref{results-lnn}.
A similar evaluation is done for batch normality normalization (BatchNormalNorm) and batch normalization (BatchNorm), shown in Table \ref{results-bnn}.

\paragraph{Normality Normalization is Performant}
LayerNormalNorm
generally
outperforms LayerNorm
across multiple architectures
and
datasets,
with a similar trend holding between BatchNormalNorm and BatchNorm.

\paragraph{Effective With and Without Data Augmentations}
Normality normalization is effective for models trained with
(Table \ref{results-lnn})
and without
(Table \ref{results-bnn})
data augmentations.
This is of value
in
application areas such as
time series analysis and
fine-grained
medical image
analysis,
where
it is often not clear what data augmentations are appropriate.

\begin{table}[h!]
\caption{Validation accuracy
across
several datasets
for a vision transformer (ViT) architecture (see training details for model specification),
when using LayerNormalNorm (LNN) vs. LayerNorm (LN).
Data augmentations were employed during training. 
}
\begin{center}
\begin{small}
\begin{sc}
\begin{tabular}{ccc}
\toprule
Dataset & LN & LNN \\
\midrule
SVHN & 94.61 \(\pm\) 0.31 & \textbf{95.78 \(\pm\) 0.21}\\
CIFAR10 & 89.97 \(\pm\) 0.16 & \textbf{91.18 \(\pm\) 0.13}\\
\hline
CIFAR100 & 66.40 \(\pm\) 0.42 & \textbf{70.12 \(\pm\) 0.22}\\
\hline
Food101 & 73.25 \(\pm\) 0.19 & \textbf{79.11 \(\pm\) 0.09}\\
\hline
ImageNet Top1 & 71.54 \(\pm\) 0.16 & \textbf{75.25 \(\pm\) 0.07}\\
\hline
ImageNet Top5 & 89.40 \(\pm\) 0.11 & \textbf{92.23 \(\pm\) 0.04}\\
\hline
\bottomrule
\end{tabular}
\end{sc}
\end{small}
\end{center}
\label{results-lnn}
\end{table}

\begin{table}[h!]
\caption{Validation accuracy for several ResNet (RN)
architecture and dataset combinations, when using BatchNormalNorm (BNN) vs. BatchNorm (BN).
No data augmentations were employed during training.
}
\begin{center}
\begin{small}
\begin{sc}
\begin{tabular}{cccc}
\toprule
Dataset & Model & BN & BNN \\
\midrule
CIFAR10 & RN18 & 88.89 \(\pm\) 0.07 & \textbf{90.41 \(\pm\) 0.09}\\
\hline
CIFAR100 & RN18 & 62.02 \(\pm\) 0.17 & \textbf{65.82 \(\pm\) 0.11}\\
\hline
STL10 & RN34 & 58.82 \(\pm\) 0.52 & \textbf{63.86 \(\pm\) 0.45}\\
\hline
TinyIN Top1 & RN34 & 58.22 \(\pm\) 0.12 & \textbf{60.57 \(\pm\) 0.14}\\
\hline
TinyIN Top5 & RN34 & 81.74 \(\pm\) 0.16 & \textbf{83.31 \(\pm\) 0.13}\\
\hline
Caltech101 & RN50 & 72.60 \(\pm\) 0.35 & \textbf{74.71 \(\pm\) 0.51}\\
\hline
Food101 & RN50 & 61.15 \(\pm\) 0.44 & \textbf{63.51 \(\pm\) 0.33}\\
\hline
\bottomrule
\end{tabular}
\end{sc}
\end{small}
\end{center}
\label{results-bnn}
\end{table}

\begin{figure}[h!]
  \centering
  	  \includegraphics[width=0.80\linewidth]{./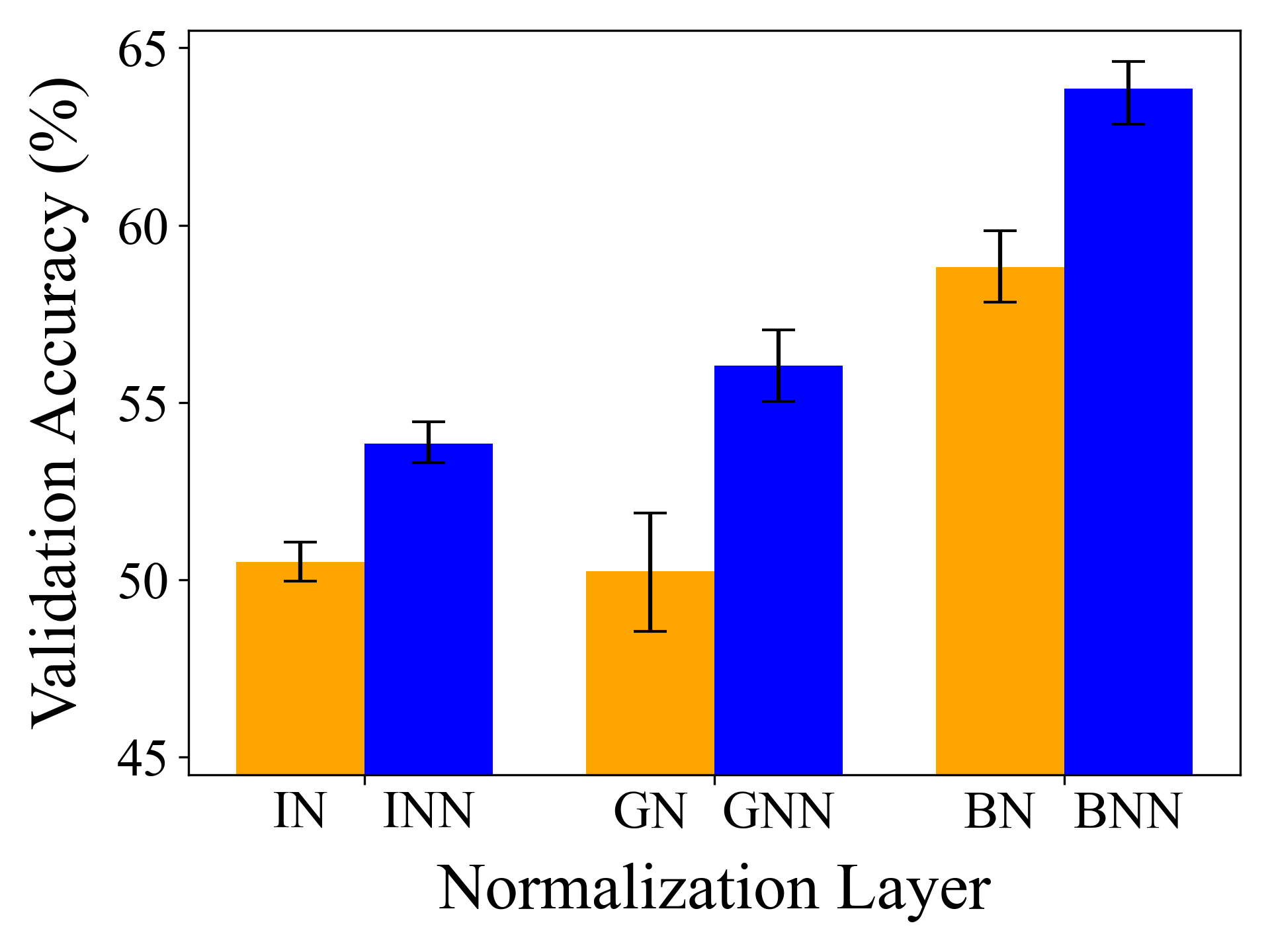}
 	\vspace{-3.5mm}
    \caption{\textbf{Normality normalization is effective for various normalization layers.}
	Validation accuracy for ResNet34 architectures evaluated on the STL10 dataset.
	Each bar represents the performance of the ResNet34 architecture, when using the given normalization layer across the entire network. INN: InstanceNormalNorm, IN: InstanceNorm, GNN: GroupNormalNorm, GN: GroupNorm, BNN:  BatchNormalNorm, BN: BatchNorm.
    }
    \label{figure-other-norm-layers}
\end{figure}

\pdfbookmark[2]{Effectiveness Across Normalization Layers}{bookmark-text-other-norm-layers}
\subsection{Effectiveness Across Normalization Layers}
\label{text-other-norm-layers}
Figure \ref{figure-other-norm-layers} demonstrates the
effectiveness of normality normalization
across various
normalization layer types.
Here we further
augmented
group normalization (GroupNorm) to group normality normalization (GroupNormalNorm), and instance normalization (InstanceNorm) to instance normality normalization (InstanceNormalNorm).

Table \ref{results-dbnn}
furthermore
contrasts decorrelated
batch normalization \citep{Huang2018DecorrelatedBN}
with
its augmented form
decorrelated
batch normality normalization,
providing further evidence
that normality normalization can be employed wherever normalization layers are conventionally used.

\begin{table}[h!]
\caption{As in Table \ref{results-bnn}, but for models using
decorrelated BatchNormalNorm (DBNN) vs. decorrelated BatchNorm (DBN).
}
\begin{center}
\begin{small}
\begin{sc}
\begin{tabular}{cccc}
\toprule
Dataset & Model & DBN & DBNN \\
\midrule
CIFAR10 & RN18 & 90.66 \(\pm\) 0.05 & \textbf{91.50 \(\pm\) 0.03}\\
\hline
CIFAR100 & RN18 & 65.11 \(\pm\) 0.06 & \textbf{67.53 \(\pm\) 0.10}\\
\hline
STL10 & RN34 & 66.76 \(\pm\) 0.29 & \textbf{69.36 \(\pm\) 0.14}\\
\hline
\bottomrule
\end{tabular}
\end{sc}
\end{small}
\end{center}
\label{results-dbnn}
\end{table}

\pdfbookmark[2]{Effectiveness Across Model Configurations}{bookmark-text-model-width}
\subsection{Effectiveness Across Model Configurations}
\label{text-model-width}
\paragraph{Network Width}
Figure \ref{figure-model-width}
shows
that BatchNormalNorm outperforms BatchNorm across varying WideResNet architecture model widths.
Of particular note
is that BatchNormalNorm shows strong performance even in the regime of relatively small network widths, whereas BatchNorm's performance deteriorates.
This may indicate that for small-width networks, which do not exhibit the Gaussian process limiting approximation attributed to large-width networks \citep{10.5555/525544,lee2018deep,NEURIPS20185a4be1fa,NEURIPS20190d1a9651},
normality normalization provides a correcting effect.
This could, for example, be beneficial
for hardware-limited deep learning applications.

\begin{figure}[h!]
  \centering
  	  \includegraphics[width=0.80\linewidth]{./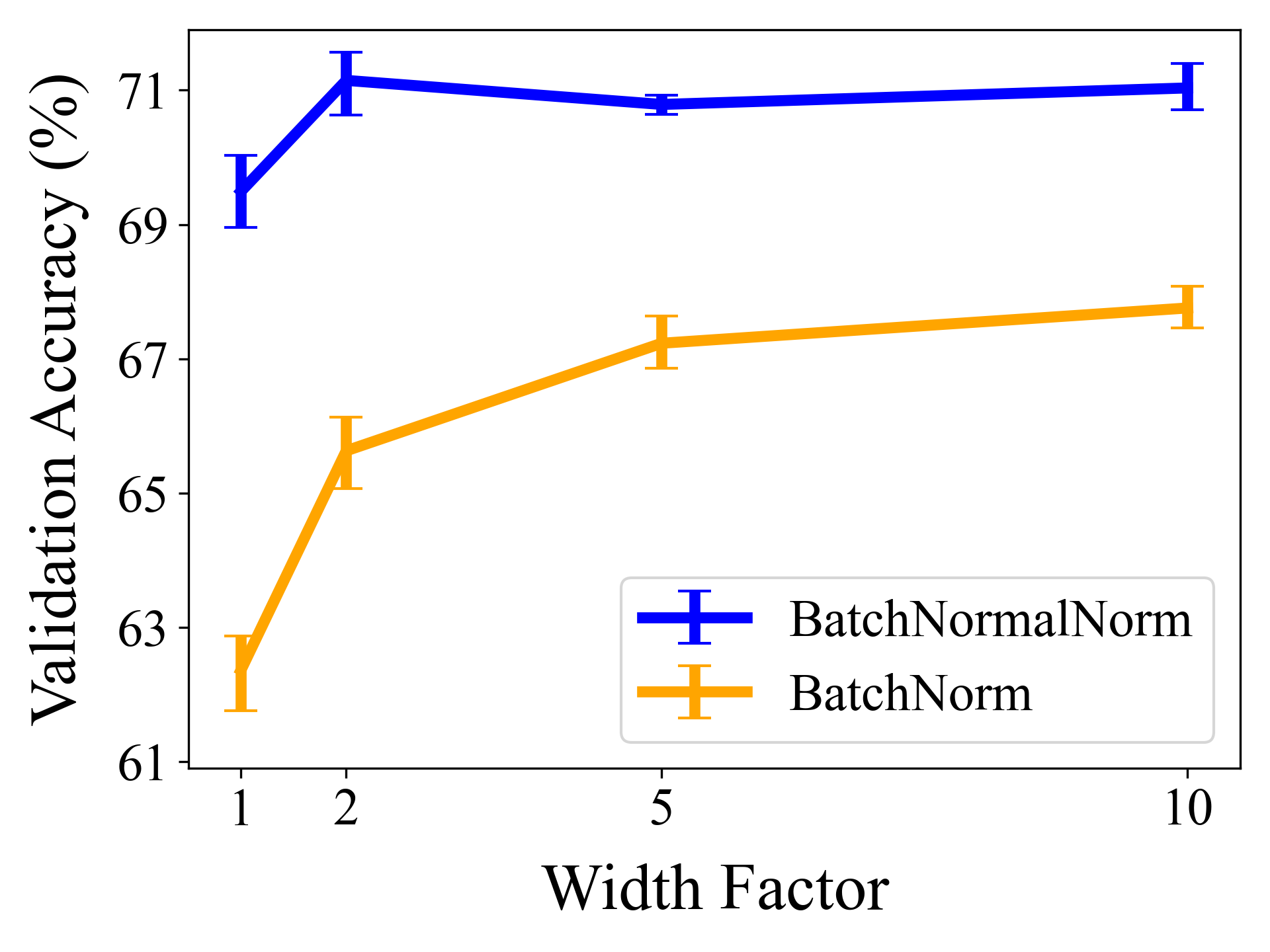}
 	\vspace{-3.5mm}
    \caption{\textbf{Normality normalization is effective for small and large width networks.}
    Validation accuracy on the STL-10 dataset for WideResNet architectures with varying width factors
    when controlling for depth of $28$,
    when using BatchNormalNorm
    vs. BatchNorm.}
    \label{figure-model-width}
\end{figure}

\paragraph{Network Depth}
Figure \ref{figure-model-depth}
shows
that BatchNormalNorm outperforms BatchNorm across varying model depths.
This suggests normality normalization
is beneficial
both for small and large-depth models.
Furthermore,
the increased benefit to performance
for
BatchNormalNorm in deeper networks suggests normality normalization
may
correct
for an increased tendency towards
non-normality as a function of model depth.

\begin{figure}[h!]
  \centering
  	  \includegraphics[width=0.80\linewidth]{./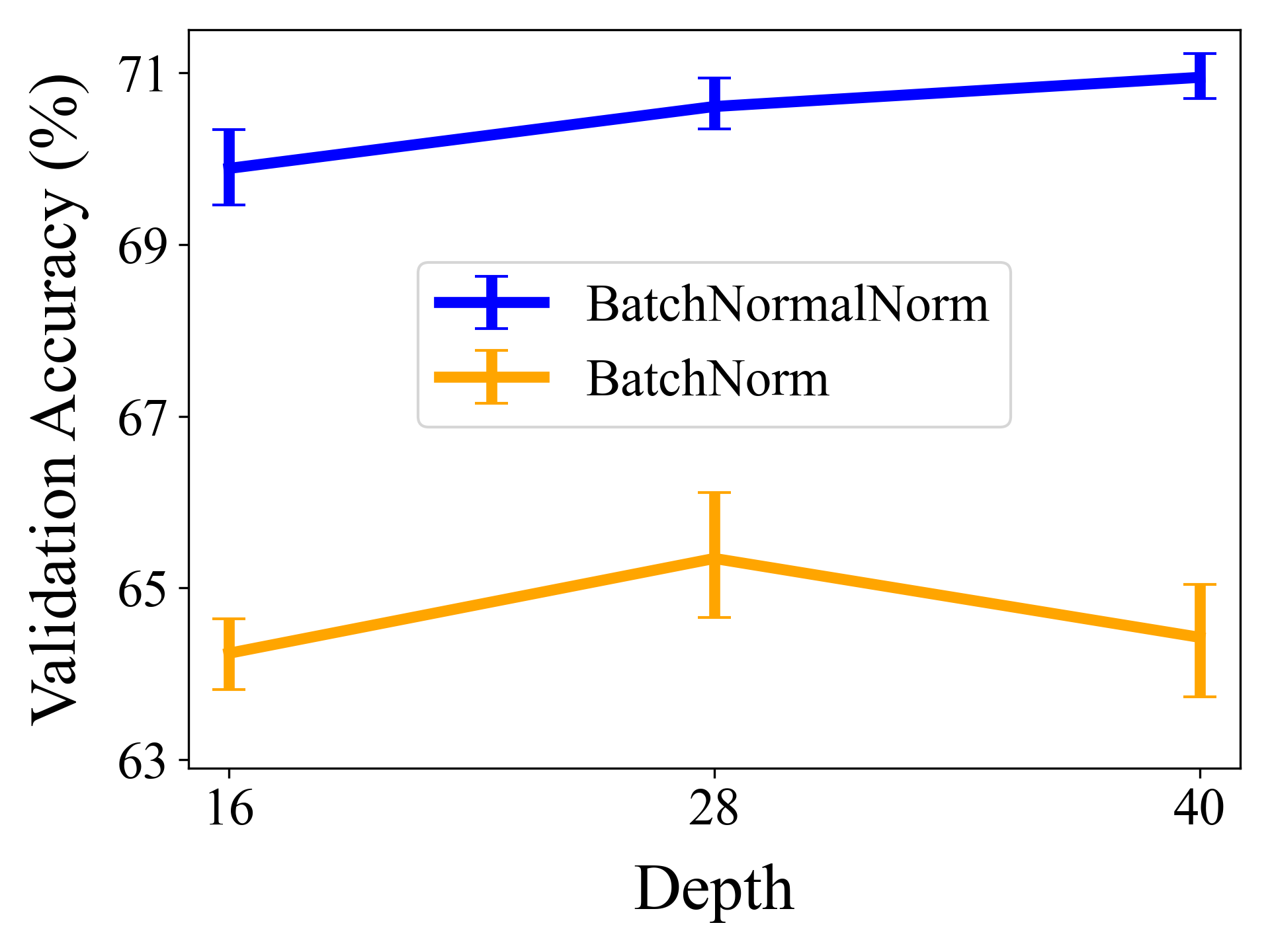}
 	\vspace{-3.5mm}
    \caption{\textbf{Normality normalization is effective for networks of various depths.}
    Validation accuracy on the STL10 dataset for
    WideResNet architectures with varying depths when controlling for a width factor of $2$,
    when using BatchNormalNorm vs. BatchNorm.}
    \label{figure-model-depth}
\end{figure}

\paragraph{Training Minibatch Size}
\label{batch-size-effect}
Figure \ref{figure-batch-size} shows that BatchNormalNorm
maintains a high level of performance
across
minibatch sizes used during training,
which provides further evidence for normality normalization's general effectiveness across a variety of configurations.

\begin{figure}[h!]
  \centering
  	  \includegraphics[width=0.80\linewidth]{./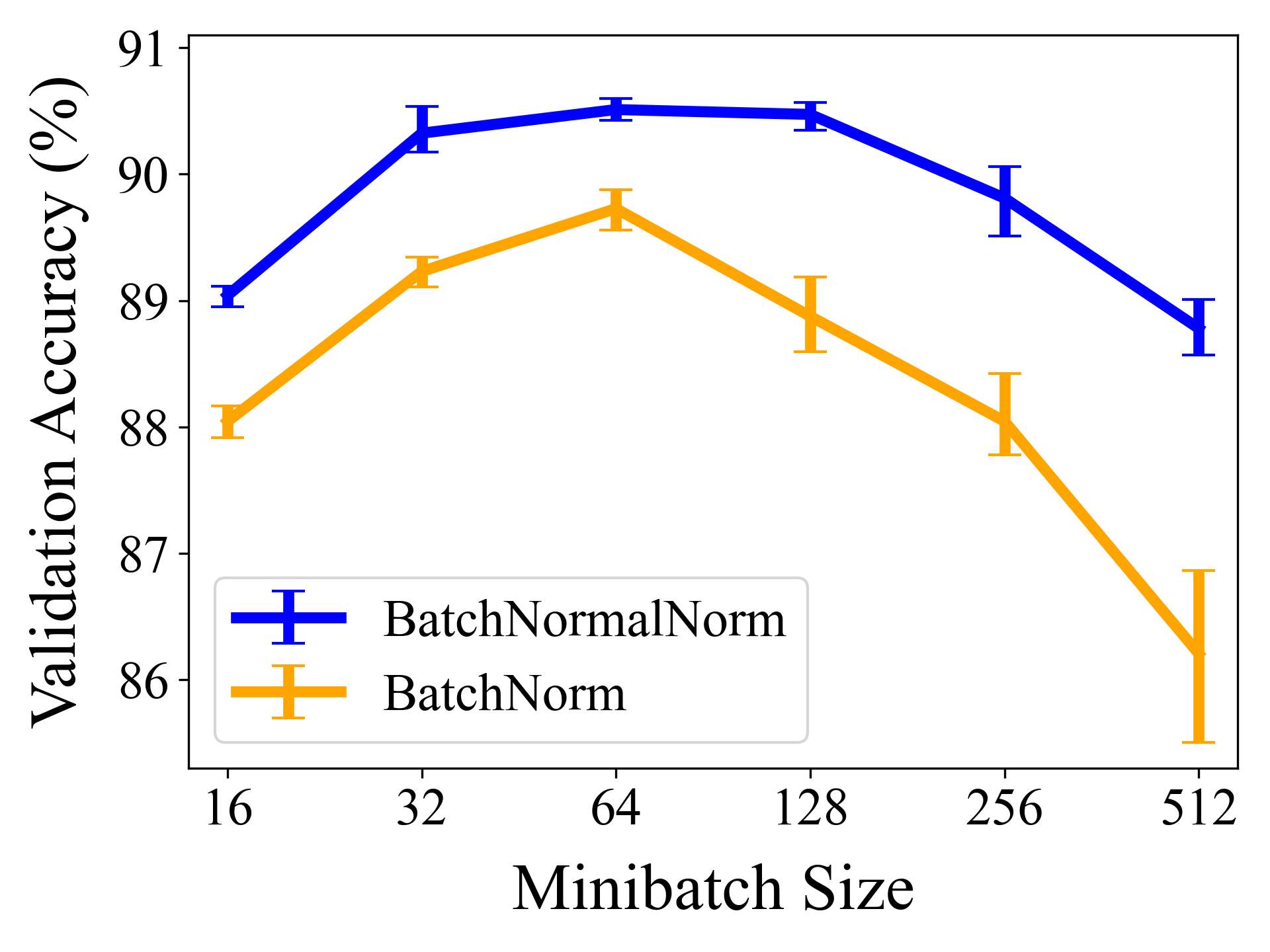}
  	  \vspace{-3.5mm}
    \caption{\textbf{Normality normalization is effective across minibatch sizes used during training.}
Validation accuracy for ResNet18 architectures evaluated on the CIFAR10 dataset,
with varying minibatch sizes used during training,
when using BatchNormalNorm vs. BatchNorm.
    }
    \label{figure-batch-size}
\end{figure}

\begin{figure}[h!]
  \centering
  \begin{minipage}[b]{0.49\textwidth}
  \centering
    \begin{subfigure}[b]{0.32\linewidth}
	  \includegraphics[width=\linewidth]{./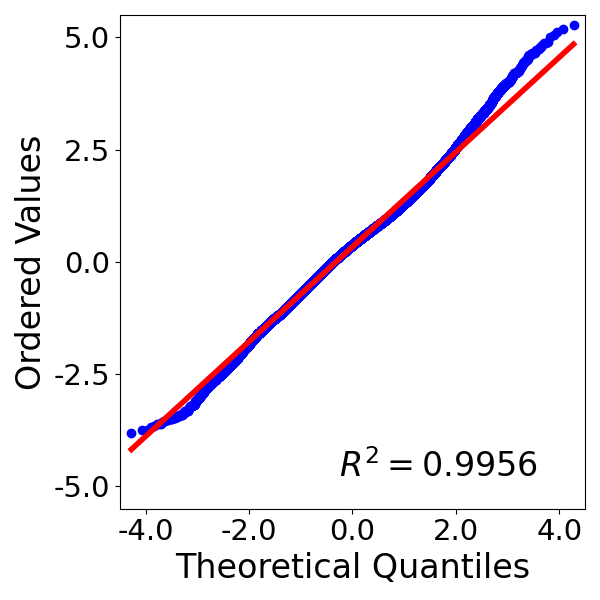}
    \end{subfigure}
    \begin{subfigure}[b]{0.32\linewidth}
      \includegraphics[width=\linewidth]{./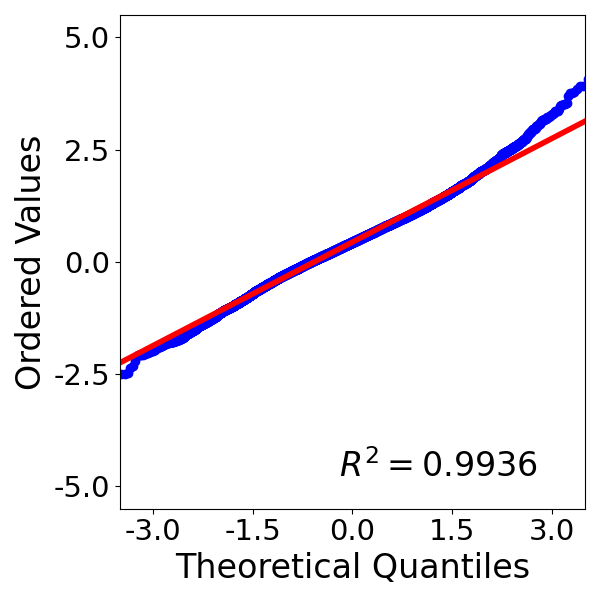}
    \end{subfigure}
    \begin{subfigure}[b]{0.32\linewidth}
      \includegraphics[width=\linewidth]{./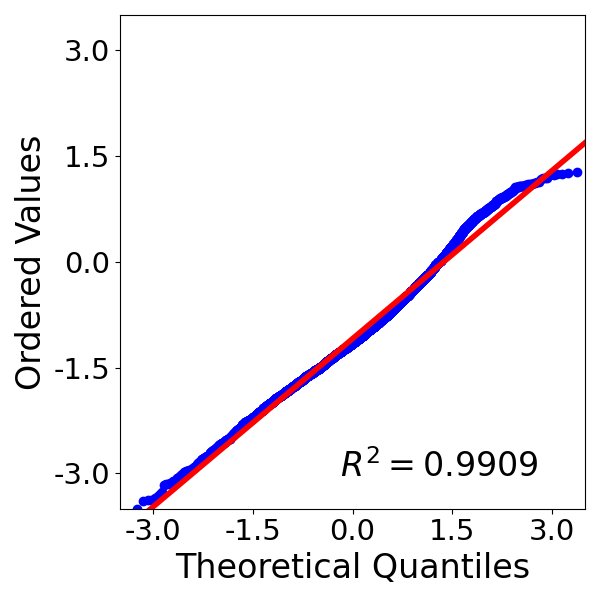}
    \end{subfigure}
  \end{minipage}
  \begin{minipage}[b]{0.49\textwidth}
  \centering
    \begin{subfigure}[b]{0.32\linewidth}
	  \includegraphics[width=\linewidth]{./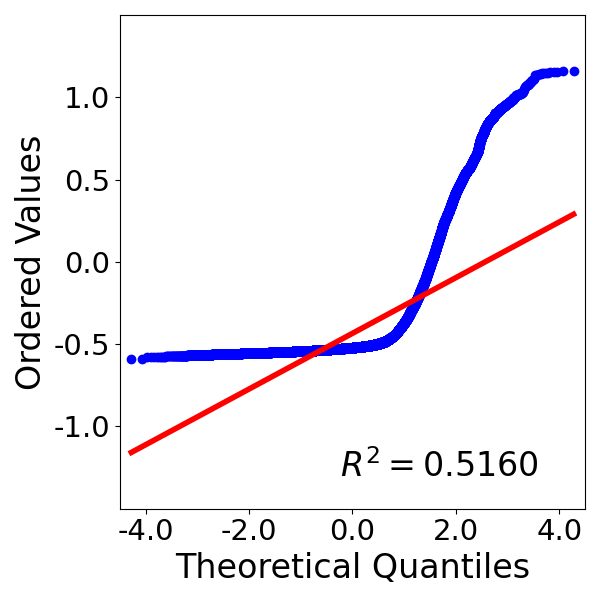}
    \end{subfigure}
    \begin{subfigure}[b]{0.32\linewidth}
      \includegraphics[width=\linewidth]{./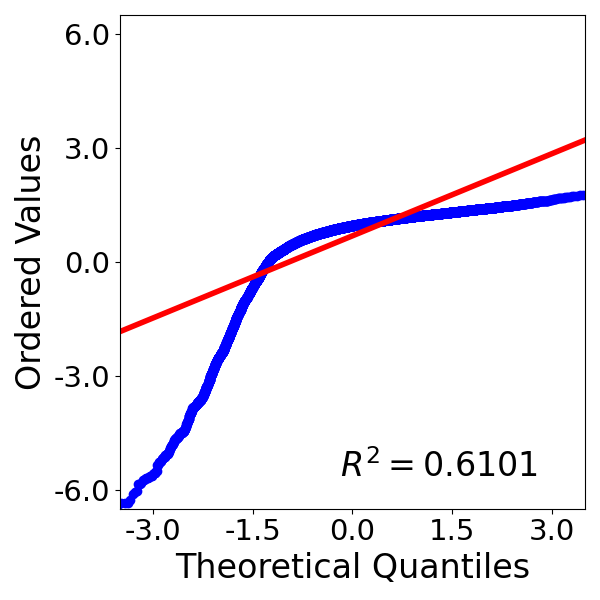}
    \end{subfigure}
    \begin{subfigure}[b]{0.32\linewidth}
      \includegraphics[width=\linewidth]{./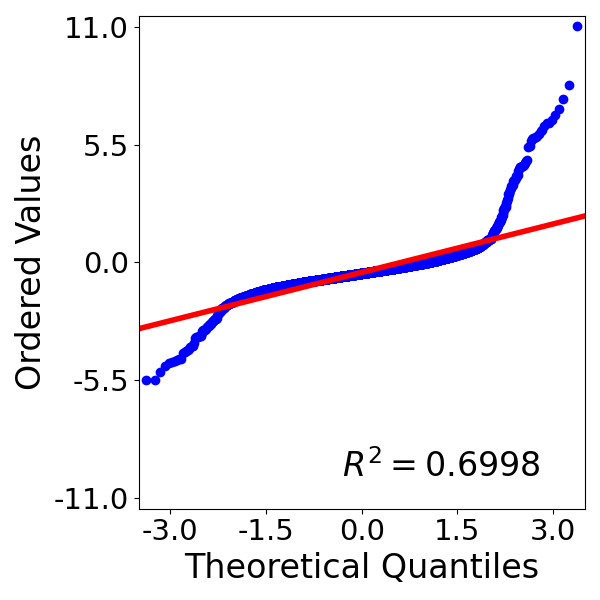}
    \end{subfigure}
  \end{minipage}
    \vspace{-6.5mm}
    \caption{Representative
	Q--Q plots of feature values for
    models
    trained to convergence
    with
    BatchNormalNorm (post-power transform, top row) vs. BatchNorm (post-normalization, bottom row),
    measured for the same validation minibatch (ResNet34/STL10). Left to right: increasing layer number.
    The x-axis represents the theoretical quantiles of the normal distribution,
    and
    the y-axis the
    sample's
    ordered values.
    A higher $R^{2}$ value for the line of best fit signifies greater gaussianity in the features. BatchNormalNorm induces greater gaussianity in the features throughout the model, in comparison to BatchNorm.
    }
    \label{figure-qq}
\end{figure}

\begin{figure}[h!]
  \centering
 \includegraphics[width=0.80\linewidth]{./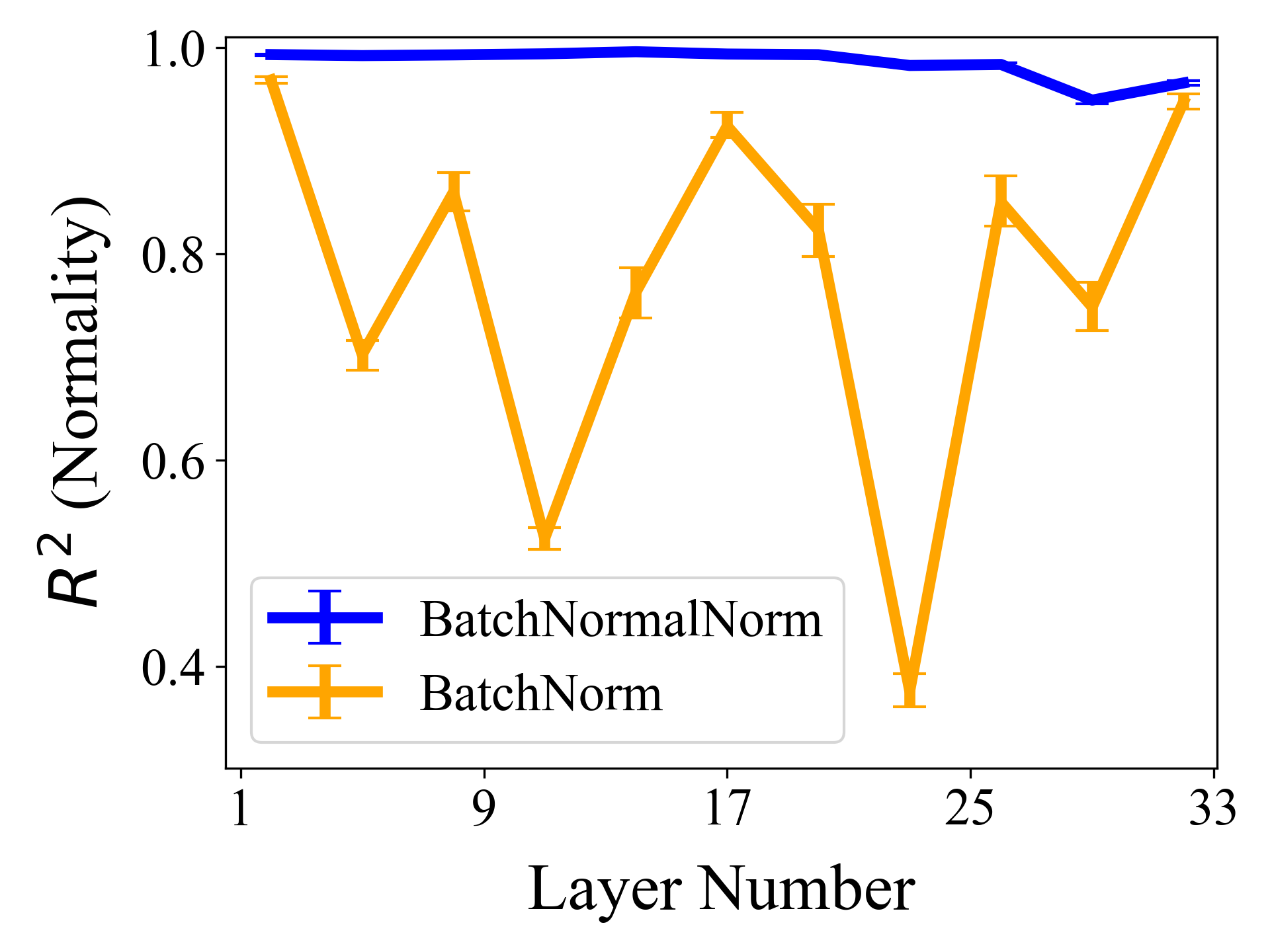}
    \vspace{-3.5mm}
    \caption{The
    average $R^{2}$
    values
    for each model layer,
    derived from
    several
    Q--Q plots
    (see Figure \ref{figure-qq})
    corresponding to
    $20$ channel and
    $10$ validation minibatch combinations,
    for models trained to convergence with BatchNormalNorm vs. BatchNorm.
    The plot demonstrates
    that
    normality normalization
    leads to
    greater gaussianity throughout the model layers.
    }
    \label{figure-r2}
\end{figure}

\pdfbookmark[2]{Normality of Representations}{bookmark-normality-representations}
\subsection{Normality of Representations}
\label{normality-representations}
Figure \ref{figure-qq} shows
representative
Q--Q
plots
\citep{
9ba8158e-d8f3-3f5b-8e59-a801be8dc025},
a method for assessing normality,
for post-power transform feature values when using BatchNormalNorm, and post-normalization
feature
values when using BatchNorm.
Figure \ref{figure-r2} shows
an aggregate measure of normality across model layers,
derived from
several Q--Q plots corresponding to different channel and minibatch combinations.
The figures correspond to models which have been trained to convergence.
The plots demonstrate
that normality normalization leads to greater gaussianity throughout the model layers.

\pdfbookmark[2]{Comparison of Additive Gaussian Noise With Scaling and Gaussian Dropout}{bookmark-experiments-other-noise-techniques}
\subsection{Comparison of Additive Gaussian Noise With Scaling and Gaussian Dropout}
\label{experiments-other-noise-techniques}
Here we contrast the proposed method of additive Gaussian noise with scaling described in Subsection \ref{perturbative-noise},
with
two other noise-based techniques.

The first is Gaussian dropout \citep{Srivastava2014DropoutAS},
where for
each input indexed by $i \in \left\{1, \ldots, N\right\}$, during training we have $y_{i} = x_{i}\cdot\left(1+z_{i} \cdot \sqrt{\frac{1-p}{p}}\right)$, where $x_{i}$ is the $i$-th input's post-power transform value, $z_{i} \sim \mathcal{N}\left(0,1\right)$, and $p \in \left(0,1\right]$ is the retention rate.

The second is additive Gaussian noise,
but without scaling by each channel's minibatch statistics. This corresponds to the proposed method in the case where $s$ is fixed to the mean of a standard half-normal distribution,\footnote{This value
for
$s$
precisely mirrors how
it
is
calculated
in
Algorithm \ref{algorithm1},
since recall
$s = \frac{1}{N}\sum_{i=1}^{N}\abs{x_{i} - \bar{x}}$.
}
i.e. $s = \sqrt{\frac{2}{\pi}}$ across all channels;
and thus does not depend on the channel statistics.

Figure \ref{figure-gaussian-dropout}
shows that
additive Gaussian noise with scaling
is more effective than Gaussian dropout,
giving further evidence for the novelty and utility of the proposed method.
It is also more effective than additive Gaussian noise (without scaling), which suggests the norm of the channel statistics plays an important role when using additive random noise.

\begin{figure}[h!]
  \centering
  	  \includegraphics[width=0.80\linewidth]{./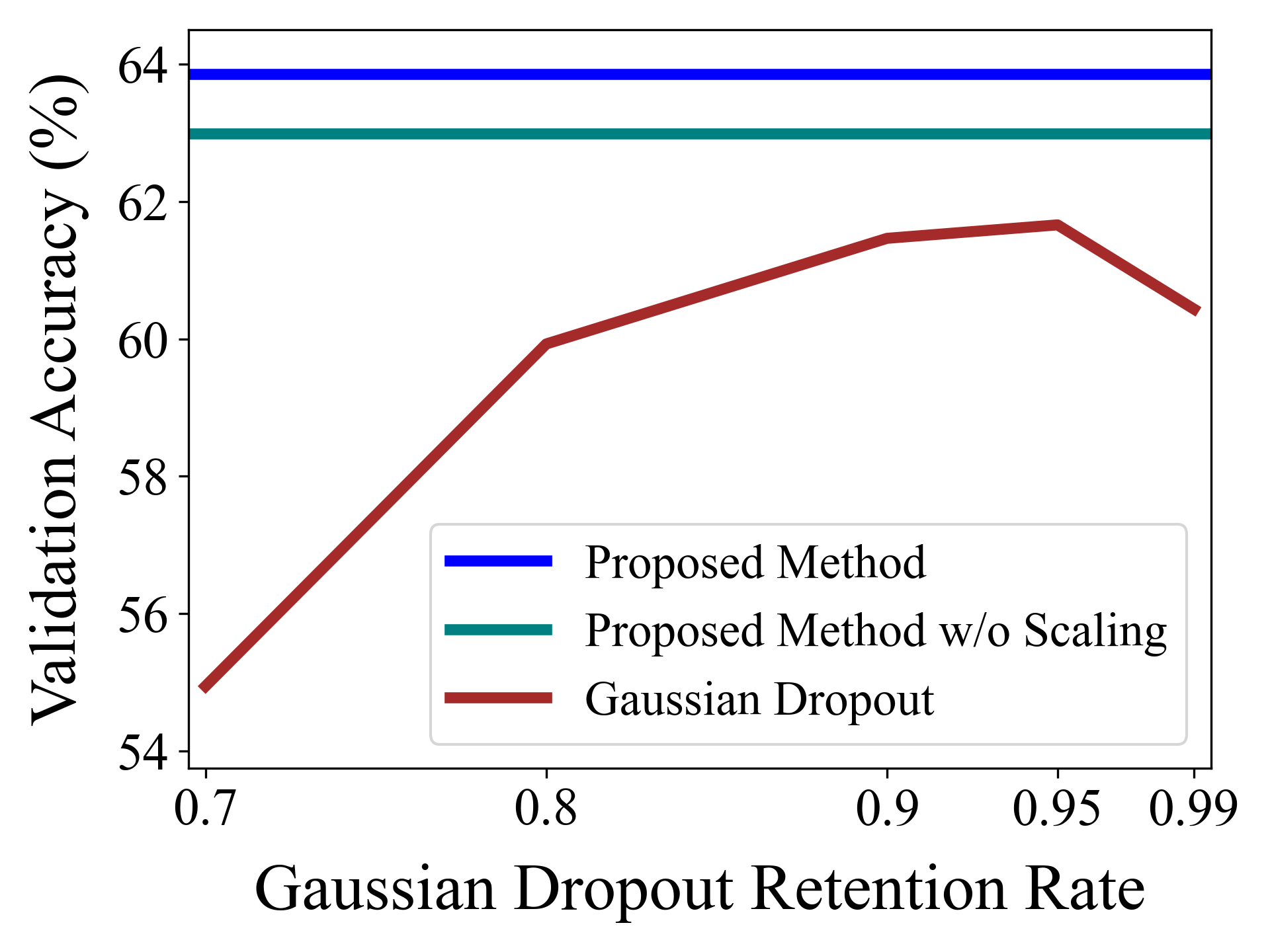}
    \vspace{-3.0mm}
    \caption{\textbf{Additive Gaussian noise with scaling is effective.}
Validation accuracy for models trained with BatchNormalNorm (ResNet34/STL10),
but with varying forms for the noise component of the normalization layer.
    }
    \label{figure-gaussian-dropout}
\end{figure}

One reason
why additive Gaussian noise with scaling
may work
better than Gaussian dropout, is because the latter scales activations multiplicatively, which means the effect of the noise is incorporated in the backpropagated errors. In contrast,
the proposed method's
noise component
does not contribute to the gradient updates directly, because it is additive. This would suggest that models trained with normality normalization obtain higher generalization performance,
because they must become robust to misattribution of gradient values during backpropagation, relative to the corrupted activation values during the forward pass.

\pdfbookmark[2]{Effect of Degree of Gaussianization}{bookmark-experiments-effect-degree-gaussianization}
\subsection{Effect of Degree of Gaussianization}
\label{experiments-effect-degree-gaussianization}
Here we consider
what
effect
differing degrees of gaussianization have on model performance, as measured by the proximity of the estimate $\hat{\lambda}$ to
the
Newton-Raphson
solution,
which was given by Equation \ref{newton-raphson-update}.

We control the proximity to the Newton-Raphson
solution
using a parameter $\alpha \in \left[0,1\right]$ in the following equation
\begin{equation}
\begin{aligned}
	\hat{\lambda} &= 1 -
	\alpha\frac{\mathcal{L}'\!\left(\bm{h};\lambda=1\right)}{\mathcal{L}''\!\left(\bm{h};\lambda=1\right)},
\end{aligned}
\label{newton-raphson-alpha-update}
\end{equation}
where $\alpha=1$ corresponds to the Newton-Raphson
solution,
and decreasing values of $\alpha$ reduce the strength of the gaussianization.

Figure \ref{figure-alpha-effect} demonstrates that the method's performance increases with increasing $\alpha$, and obtains its best performance for $\alpha=1$.
This provides further evidence that increasing gaussianity
improves
model performance.

\begin{figure}[h!]
  \centering
  	  \includegraphics[width=0.80\linewidth]{./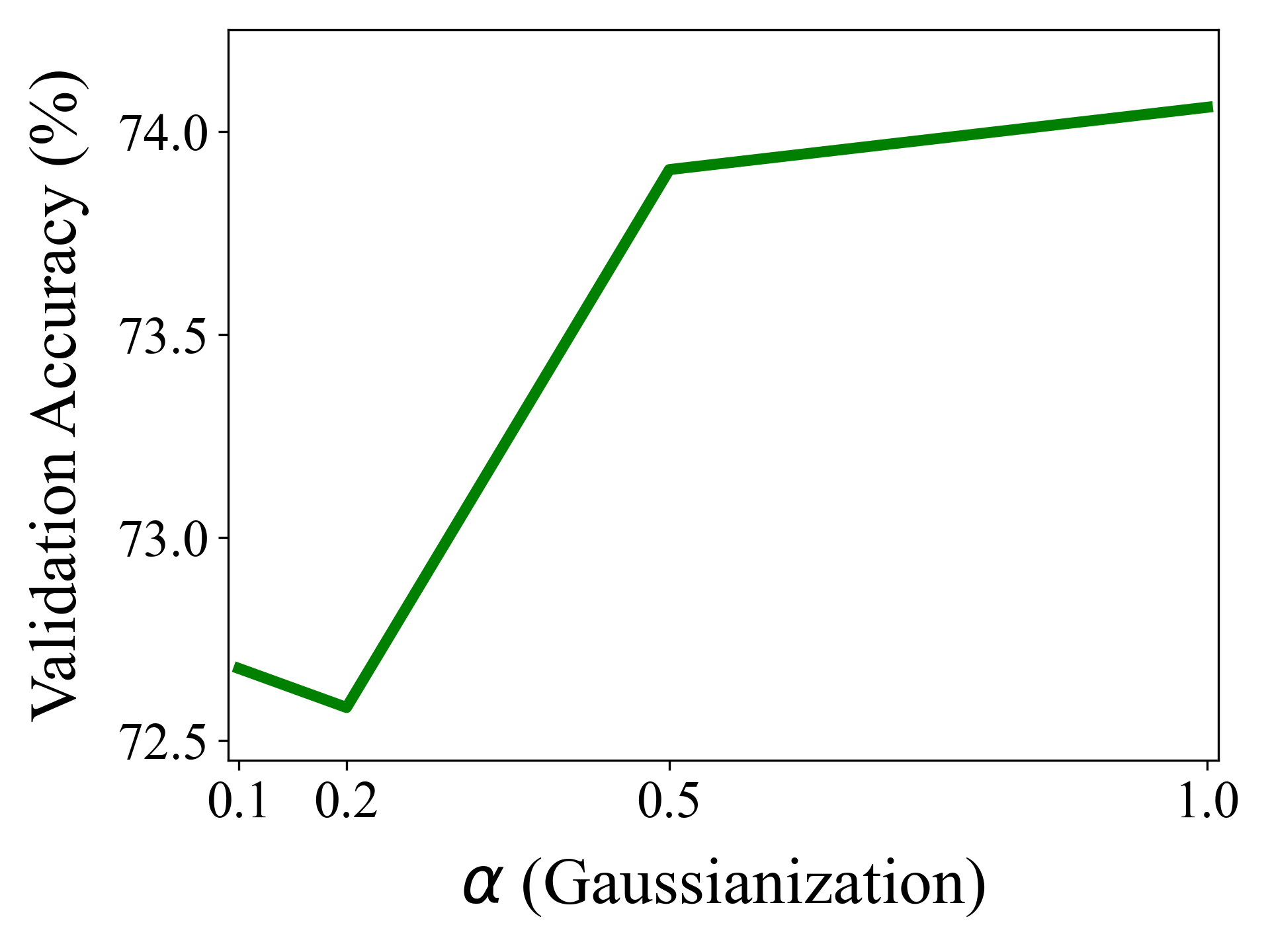}
  	  \vspace{-3.5mm}
    \caption{\textbf{Increasing gaussianity
    improves model performance.}
    Validation accuracy for models trained using
	BatchNormalNorm without noise (ResNet50/Caltech101), and with varying strengths for the gaussianization (parameterized by $\alpha$) when applying the power transform.
    }
    \label{figure-alpha-effect}
\end{figure}

\pdfbookmark[2]{Controlling for the Power Transform and Additive Noise Components}{bookmark-control-effect-norm}
\subsection{Controlling for the Power Transform and Additive Noise Components}
\label{control-effect-norm}
Figure \ref{figure-control-norm-layer}
demonstrates that both components of normality normalization -- the power transform, and the additive Gaussian noise with scaling -- each
contribute meaningfully to the increase in performance
for
models trained with normality normalization.

\begin{figure}[h!]
  \centering
  \begin{minipage}[b]{0.49\textwidth}
  \centering
    \begin{subfigure}[b]{0.49\linewidth}
	  \includegraphics[width=\linewidth]{./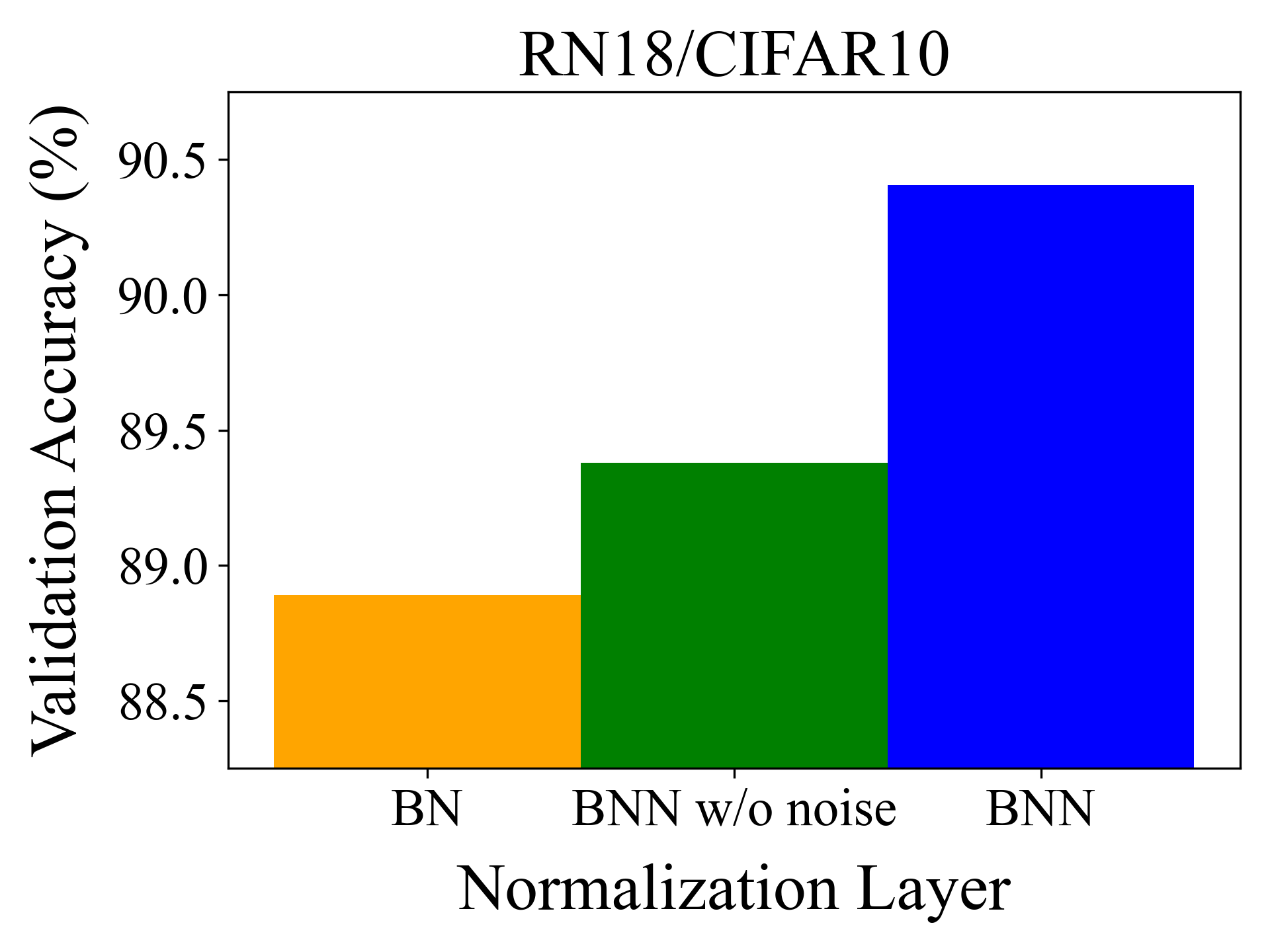}
    \end{subfigure}
    \begin{subfigure}[b]{0.49\linewidth}
      \includegraphics[width=\linewidth]{./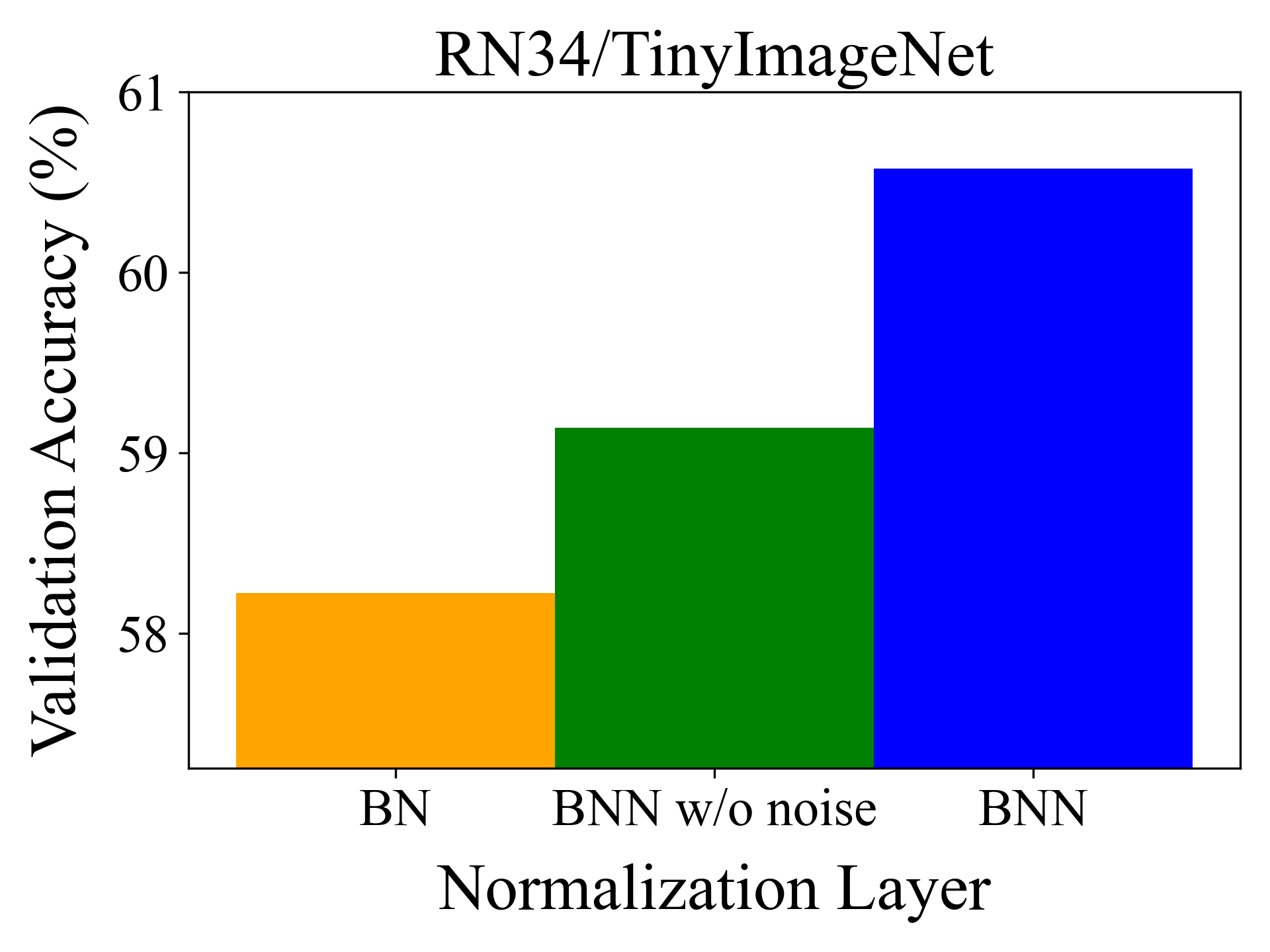}
    \end{subfigure}
  \end{minipage}
  \begin{minipage}[b]{0.49\textwidth}
  \centering
    \begin{subfigure}[b]{0.49\linewidth}
      \includegraphics[width=\linewidth]{./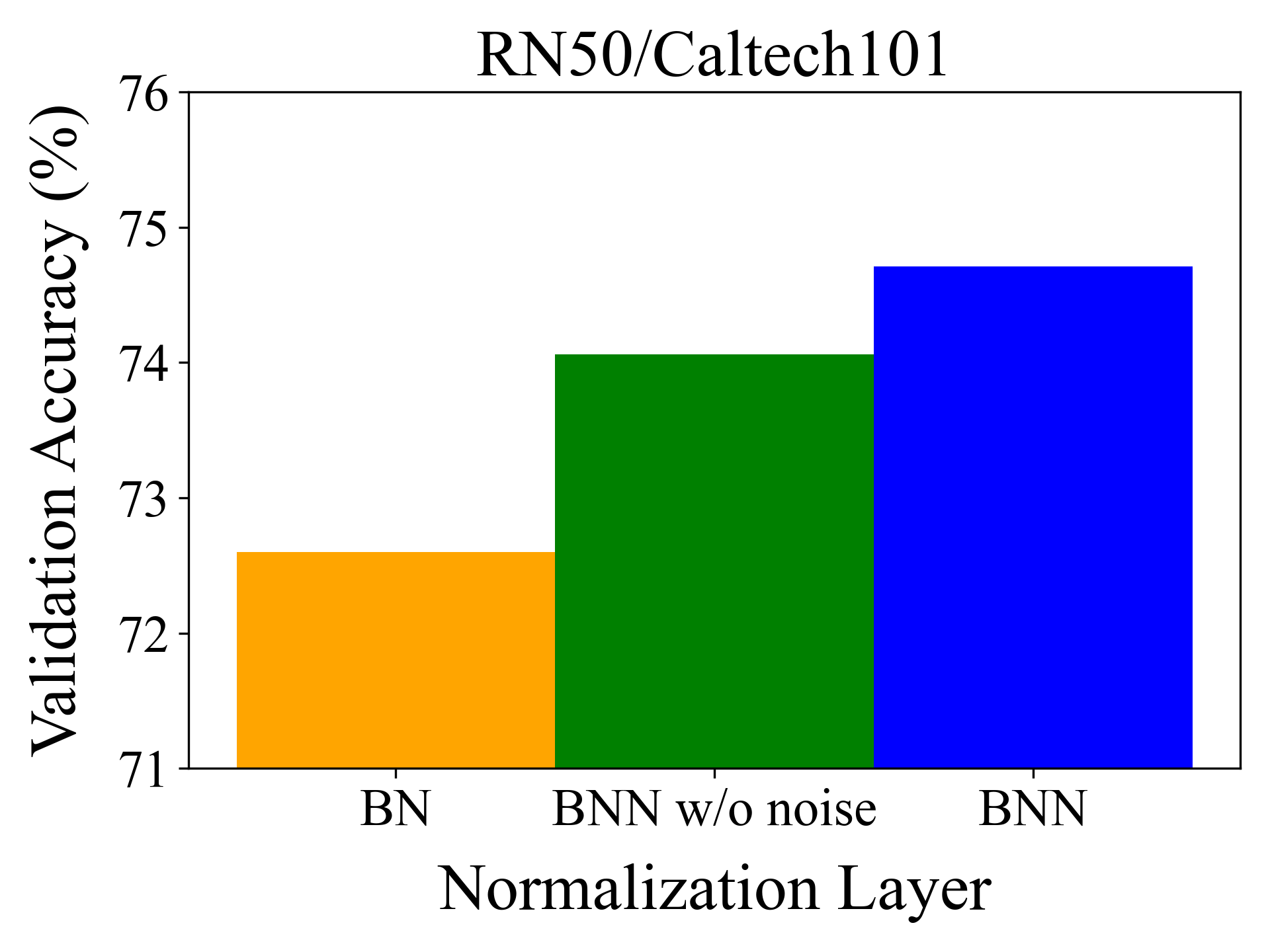}
    \end{subfigure}
    \begin{subfigure}[b]{0.49\linewidth}
      \includegraphics[width=\linewidth]{./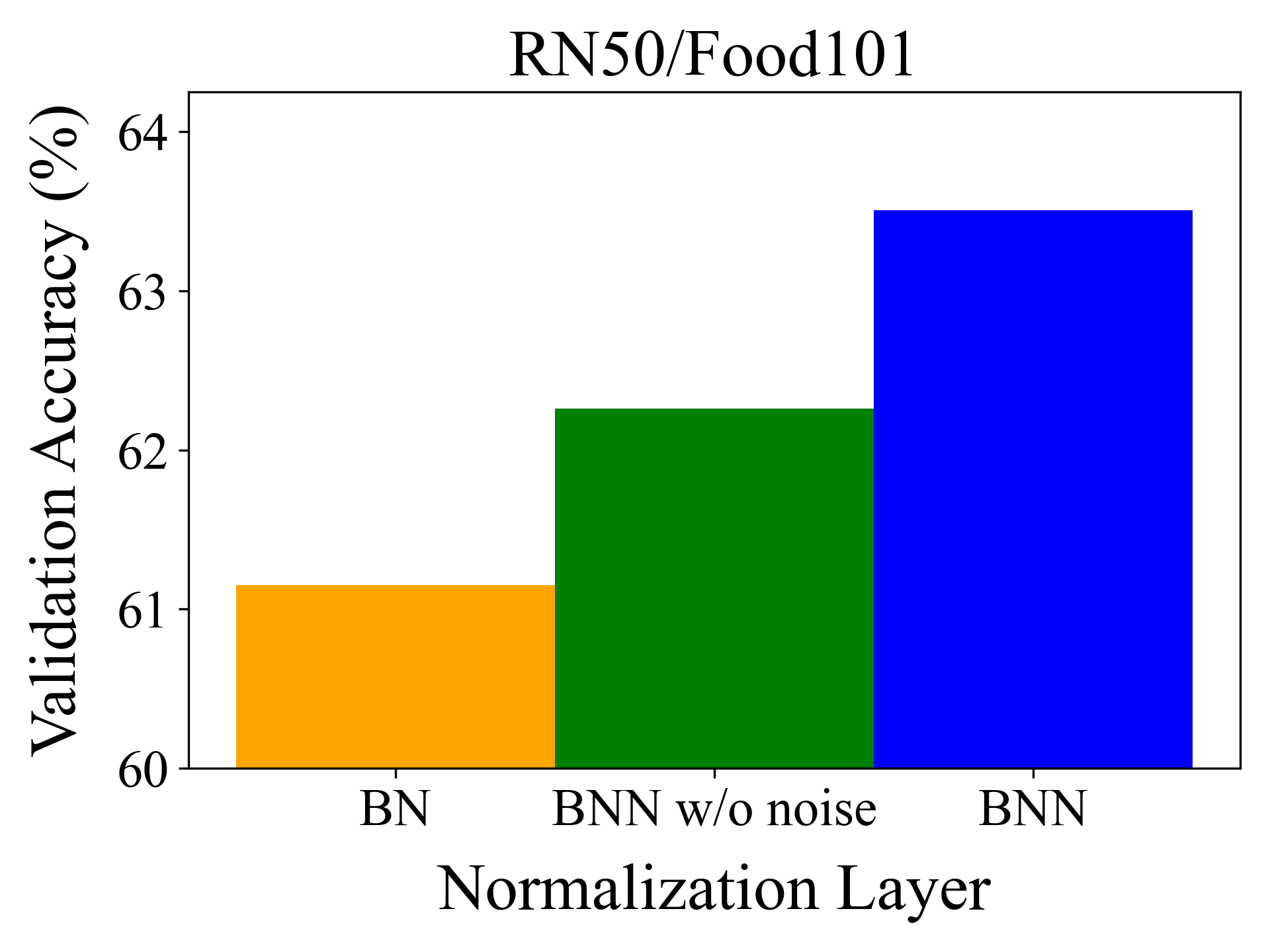}
    \end{subfigure}
  \end{minipage}
    \vspace{-6.5mm}
    \caption{\textbf{Controlling for the effects of the power transform and the additive Gaussian noise with scaling components.}
	Each subplot
	shows
	the performance
	of
	models
	trained with
	BatchNormalNorm with
	the use of additive Gaussian noise with scaling
	(BNN),
	and without (BNN w/o noise), while using BatchNorm (BN) as a baseline. Subplot titles indicate the model and dataset combination.
    }
    \label{figure-control-norm-layer}
\end{figure}

\pdfbookmark[2]{Additional Experiments \& Analysis}{bookmark-text-additional-experiments-analysis}
\subsection{Additional Experiments \& Analysis}
\label{text-additional-experiments-analysis}
We next describe
several additional experiments and analyses which serve to
further
demonstrate the effectiveness of normality normalization,
and to substantiate the applicability of the motivation we presented in Section \ref{motivation}.

\paragraph{Normality normalization induces robustness to noise at test time.}
Appendix \ref{noise-robustness}
demonstrates
that
models trained using normality normalization are more robust to random noise at test time.
This substantiates the applicability of the noise robustness framework
presented
in Motivation Subsection \ref{mutual-information-game}, and consequently of the benefit of gaussianizing
representations.

\paragraph{Speed benchmarks.}
Appendix \ref{appendix-speed-benchmarks}
shows
that normality normalization increases runtime; with a
larger
deviation at training time
than at test time.

\paragraph{Normality normalization uniquely maintains gaussianity throughout training.}
Appendix \ref{appendix-normality-initialization}
provides
a
graphical illustration
of
the fact
that at initialization, layer pre-activations are
close
to
Gaussian regardless of the normalization layer employed;
thus
only models trained with normality normalization maintain gaussianity throughout training.

\paragraph{Normality normalization induces greater feature independence.}
Appendix \ref{appendix-joint-normality-independence}
demonstrates
that normality normalization imbues models with greater joint normality and greater independence between channel features, throughout the layers of a model. This is of value in
context of the
benefit
feature independence is thought to provide, which
was
explored
in Motivation Subsection \ref{text-maximally-independent}.

\pdfbookmark[1]{Related Work \& Future Directions}{bookmark-related-work}
\section{Related Work \& Future Directions}
\label{related-work}
\paragraph{Power Transforms}
Various power transforms have been developed \citep{b6d53586-2890-3ac6-bec5-c3cfdcb64879,Yeo2000}
and their properties studied \citep{10.2307/2287172},
for increasing normality in data. \citet{b6d53586-2890-3ac6-bec5-c3cfdcb64879} defined a power transform which is convex in its parameter, but is only defined for positive variables.
\citet{Yeo2000} presented an alternative
power transform which was
furthermore defined for the entire real line, preserved the convexity property with respect to its parameter for positive input values (concavity in the parameter for negative input values), and additionally addressed skewed input distributions.

It is worth noting that many
power transforms
were developed with the aim of improving the validity of statistical tests relying on the assumption of normality in the data.
This is in contrast with the present work, which uses an information-theoretic motivation for gaussianizing.

\paragraph{Gaussianization}
Alternative
approaches to
gaussianization,
such as
transformations
for gaussianizing heavy-tailed distributions \citep{https://doi.org/10.1155/2015/909231},
iterative gaussianization techniques \citep{NIPS20003c947bc2,5720319},
and copula-based gaussianization \citep{10.5555/1204326},
offer interesting directions for future work.
Non-parametric techniques
for gaussianizing,
for example those using quantile functions
\citep{quantile-book},
may
not be easily amenable to the deep learning setting where models are trained using
backpropagation
and
gradient descent.

\paragraph{Usage in Other Normalization Layers}
Works which have previously assumed normality in the pre-activations
to motivate and develop their methodology,
for example as seen in
normalization propagation \citep{pmlr-v48-arpitb16}, may benefit from normality normalization's explicit gaussianizing effect.
It would also be interesting to explore what effect gaussianizing model weights
might have,
for example by using normality normalization to augment weight normalization \citep{NIPS2016_ed265bc9}.

\paragraph{Adversarial Robustness}
It would be interesting to tie the present work with
those
suggesting robustness to
$\ell_{2}$-norm constrained
adversarial perturbations
increases when training with Gaussian noise \citep{pmlr-v97-cohen19c,NEURIPS20193a24b25a}. Furthermore, it has been suggested
that
adversarial examples and images corrupted with Gaussian noise may be related \citep{Ford2019AdversarialEA}.
This
may
indicate gaining robustness to Gaussian noise not only in the inputs, but throughout the model, can lead to greater adversarial robustness.

However, gaussianizing activations and training with Gaussian noise, may only be a defense in the distributional sense; exact knowledge of the weights (and consequently of the activation values), as is often assumed in the adversarial robustness setting, is not captured by the noise-based robustness framework, which is only concerned with distributional assumptions over the activation values. Nevertheless it does suggest that, on average, greater robustness may be attainable.

\paragraph{Neural Networks as Gaussian Processes}
\citet{10.5555/525544} showed that in the limit of infinite width, a single layer neural network at initialization approximates a Gaussian process. This result
has been extended to the multi-layer setting by \citet{lee2018deep}, and
\citet{NEURIPS20185a4be1fa,NEURIPS20190d1a9651} suggest the Gaussian process approximation may remain valid beyond network initialization.
However, these analyses still necessitate
the infinite width limit assumption.

Subsequent work
showed that
batch normalization
lends
itself to a
non-asymptotic
approximation
to
normality
throughout the layers of neural networks
at initialization \citep{NEURIPS202126cd8eca}.
Given its gaussianizing effect,
layers trained with
normality normalization may be amenable -- throughout training -- to a non-asymptotic approximation to Gaussian processes.
This
could
help
further
address
the
disparity
in
the analysis of neural networks in the infinite width limit, for example as in mean-field theory,
with
the finite width setting
\citep{pmlr-v202-joudaki23a}.

\pdfbookmark[1]{Conclusion}{bookmark-conclusion}
\section{Conclusion}
\label{conclusion}
Among
the methodological developments that have spurred the advent of deep learning,
their success
has often been attributed to their effect
on
the model's ability to learn and encode representations effectively, whether in the activations or in the weights.
This can be seen, for example, by considering the importance
attributed to
initializing model weights suitably, or by the effect different activation functions have on learning dynamics.

Seldom
has a prescription for precisely what distribution a deep learning model should use to effectively encode its activations,
and exactly how this can be achieved,
been investigated. The present work addresses this -- first by motivating the normal distribution as the probability distribution of choice, and subsequently by materializing this choice through normality normalization.

It is perhaps nowhere clearer what representational benefit normality normalization provides, than when considering that no additional learnable parameters
relative to existing normalization layers
were introduced. This highlights -- and precisely controls for the effect of -- the importance of encouraging models to encode their representations effectively.

We presented normality normalization: a novel, principledly motivated, normalization layer.
Our experiments and analysis comprehensively demonstrated the effectiveness of normality normalization,
in regards to its generalization performance on an array of widely used model and dataset combinations,
its consistently strong performance across
various common factors of variation
such as model
width, depth, and training minibatch size,
its suitability
for usage
wherever existing normalization layers are conventionally used,
and through its effect on improving model robustness to random perturbations.
\newpage

\section*{Acknowledgments}
We acknowledge the support
provided by
Compute Ontario (computeontario.ca) and the Digital Research Alliance of Canada (alliancecan.ca),
and
the support of
the Natural Sciences and Engineering Research Council of Canada (NSERC),
Discovery Grant RGPIN-2021-02527.

\section*{Impact Statement}
This work is of general interest to the machine learning
and broader scientific
community.
There are many potential applications of the work,
for which
endeavoring to
judiciously
highlight
one such possible application, in place of another,
would be
a
precarious
undertaking.

\bibliography{icml2025}
\bibliographystyle{icml2025}

\newpage
\appendix
\onecolumn
\pdfbookmark[1]{Additional Experiments \& Analysis}{bookmark-appendix-additional-results}
\section{Additional Experiments \& Analysis}

\pdfbookmark[2]{Noise Robustness}{bookmark-noise-robustness}
\subsection{Noise Robustness}
\label{noise-robustness}
We use the following framework to measure a model's robustness to noise (a similar setting is used by \citet{Arora2018StrongerGB}). For a given data point, consider a
pair of units in a neural network, the first in the $k$-th layer and the second in the $\ell$-th layer. For the
unit in the $k$-th layer, let $x$ denote the data point's post-normalization value. Let $\phi_{k,\ell}\left(x\right)$ be the same data point's post-normalization value for the
unit in the subsequent layer $\ell$, where the function $\phi_{k,\ell}$ encapsulates all the intermediate computations between the two normalization layers $k$ and $\ell$.

Let
$y = x + z \cdot \delta \cdot \frac{1}{N}\norm{\bm{x} - \bar{\bm{x}}}_{1}$,
where
as in Subsection \ref{perturbative-noise}
\(z \sim \mathcal{N}\left(0, 1\right)\),
\(\delta \ge 0\),
and here \(\norm{\bm{x} - \bar{\bm{x}}}_{1}\) represents a global estimate for the zero-centered norm of the post-normalized values,
derived from the training set in its entirety.
We then define
noise robustness as follows:

\begin{table*}[h!]
\centering
\caption{\textbf{Normality normalization
induces
	robustness
	to noise at test time.}
    Evaluation of robustness to noise, using the relative error $\zeta_{k,\ell}^{\delta}$ for various layers $k$ and $\ell$, for
    various
    models trained with BatchNormalNorm and BatchNorm.
    Models
    were evaluated using noise factor $\delta=0.5$.
    Top: ResNet18/CIFAR10, middle: ResNet18/CIFAR100, bottom: ResNet34/STL10.
	The layer \(k\)
	at
	which noise is added is denoted on the left side of each row, and each column
	denotes a subsequent layer
	\(\ell\).
    For each entry,
    \(\zeta_{k,\ell}^{\delta}\) was averaged over the entire validation set, and over all channels in the \(k\)-th and \(\ell\)-th layers. This was
    subsequently
    averaged
    across
    \(T=6\) Monte Carlo draws for the random noise,
    and the values presented are furthermore the
    average
    across each of the \(M=6\) trained models.
	In each table entry, the top value represents the relative error for BatchNormalNorm (BNN),
	and the bottom value for BatchNorm (BN),
	with the best value
	shown in bold; lower is better.
    The tables provide
    evidence that models trained
    using
    normality normalization
    are
    generally
    more robust to random noise at test time.
}
\noindent
\vspace{+0.1mm}
\begin{minipage}[t]{1.0\textwidth}
\centering
\begin{tabular}{cc}
\begin{tabular}{ccc|c|c|c|}
&
&
\multicolumn{1}{c}{L5} & \multicolumn{1}{c}{L9} & \multicolumn{1}{c}{L13} & \multicolumn{1}{c}{L17} \\
\cline{3-6}
\multicolumn{1}{c}{L1}
&\multicolumn{1}{c|}{\begin{tabular}{@{}c@{}}BNN\\BN\end{tabular}}
&\begin{tabular}{@{}c@{}}\textbf{0.044 \(\pm\) 0.002} \\ 0.160 $\pm$ 0.009\end{tabular}
&\begin{tabular}{@{}c@{}}\textbf{0.076 \(\pm\) 0.002} \\ 0.266 \(\pm\) 0.014\end{tabular}
&\begin{tabular}{@{}c@{}}\textbf{0.124 \(\pm\) 0.002} \\ 0.390 $\pm$ 0.016\end{tabular}
&\begin{tabular}{@{}c@{}}\textbf{0.211 \(\pm\) 0.005} \\ 1.002 \(\pm\) 0.032\end{tabular}
\\
\cline{3-6}
\multicolumn{1}{c}{L5}
&\multicolumn{1}{c}{}
&\multicolumn{1}{c|}{} &\begin{tabular}{@{}c@{}}\textbf{0.031 \(\pm\) 0.001} \\ 0.060 \(\pm\) 0.002\end{tabular}
&\begin{tabular}{@{}c@{}}\textbf{0.049 \(\pm\) 0.001} \\ 0.093 \(\pm\) 0.004\end{tabular}
&\begin{tabular}{@{}c@{}}\textbf{0.080 \(\pm\) 0.002} \\ 0.201 \(\pm\) 0.005\end{tabular}
\\
\cline{4-6}
\multicolumn{1}{c}{L9}
&\multicolumn{1}{c}{}
&\multicolumn{1}{c}{}
&\multicolumn{1}{c|}{}
&\begin{tabular}{@{}c@{}}\textbf{0.048 \(\pm\) 0.001} \\ 0.060 \(\pm\) 0.003\end{tabular}
&\begin{tabular}{@{}c@{}}\textbf{0.070 \(\pm\) 0.001} \\ 0.117 \(\pm\) 0.003\end{tabular}
\\
\cline{5-6}
\multicolumn{1}{c}{L13}
&\multicolumn{1}{c}{}
&\multicolumn{1}{c}{}
&\multicolumn{1}{c}{}
&\multicolumn{1}{c|}{}
&\begin{tabular}{@{}c@{}}\textbf{0.142 \(\pm\) 0.006} \\ 0.203 \(\pm\) 0.005\end{tabular}
\\
\cline{6-6}
\end{tabular}
\end{tabular}
\end{minipage}
\noindent
\begin{minipage}[t]{1.0\textwidth}
\centering
\vspace{+4mm}
\begin{tabular}{cc}
\begin{tabular}{ccc|c|c|c|}
&
&
\multicolumn{1}{c}{L5} & \multicolumn{1}{c}{L9} & \multicolumn{1}{c}{L13} & \multicolumn{1}{c}{L17} \\
\cline{3-6}
\multicolumn{1}{c}{L1}
&\multicolumn{1}{c|}{\begin{tabular}{@{}c@{}}BNN\\BN\end{tabular}}
&\begin{tabular}{@{}c@{}}\textbf{0.051 \(\pm\) 0.001} \\ 0.177 $\pm$ 0.007\end{tabular}
&\begin{tabular}{@{}c@{}}\textbf{0.076 \(\pm\) 0.001} \\ 0.333 \(\pm\) 0.011\end{tabular}
&\begin{tabular}{@{}c@{}}\textbf{0.100 \(\pm\) 0.001} \\ 0.419 $\pm$ 0.021\end{tabular}
&\begin{tabular}{@{}c@{}}\textbf{0.387 \(\pm\) 0.004} \\ 1.922 \(\pm\) 0.076\end{tabular}
\\
\cline{3-6}
\multicolumn{1}{c}{L5}
&\multicolumn{1}{c}{}
&\multicolumn{1}{c|}{} &\begin{tabular}{@{}c@{}}\textbf{0.027 \(\pm\) 0.002} \\ 0.059 \(\pm\) 0.009\end{tabular}
&\begin{tabular}{@{}c@{}}\textbf{0.038 \(\pm\) 0.003} \\ 0.073 \(\pm\) 0.009\end{tabular}
&\begin{tabular}{@{}c@{}}\textbf{0.149 \(\pm\) 0.012} \\ 0.373 \(\pm\) 0.041\end{tabular}
\\
\cline{4-6}
\multicolumn{1}{c}{L9}
&\multicolumn{1}{c}{}
&\multicolumn{1}{c}{}
&\multicolumn{1}{c|}{}
&\begin{tabular}{@{}c@{}}\textbf{0.044 \(\pm\) 0.001} \\ 0.063 \(\pm\) 0.003\end{tabular}
&\begin{tabular}{@{}c@{}}\textbf{0.151 \(\pm\) 0.002} \\ 0.249 \(\pm\) 0.009\end{tabular}
\\
\cline{5-6}
\multicolumn{1}{c}{L13}
&\multicolumn{1}{c}{}
&\multicolumn{1}{c}{}
&\multicolumn{1}{c}{}
&\multicolumn{1}{c|}{}
&\begin{tabular}{@{}c@{}}\textbf{0.257 \(\pm\) 0.001} \\ 0.367 \(\pm\) 0.020\end{tabular}
\\
\cline{6-6}
\end{tabular}
\end{tabular}
\end{minipage}
\noindent
\begin{minipage}[t]{1.0\textwidth}
\centering
\vspace{+4mm}
\begin{tabular}{cc}
\begin{tabular}{ccc|c|c|c|}
&
&
\multicolumn{1}{c}{L9} & \multicolumn{1}{c}{L17} & \multicolumn{1}{c}{L25} & \multicolumn{1}{c}{L33} \\
\cline{3-6}
\multicolumn{1}{c}{L1}
&\multicolumn{1}{c|}{\begin{tabular}{@{}c@{}}BNN\\BN\end{tabular}}
&\begin{tabular}{@{}c@{}}\textbf{0.373 \(\pm\) 0.018} \\ 0.615 \(\pm\) 0.046\end{tabular}
&\begin{tabular}{@{}c@{}}\textbf{0.709 \(\pm\) 0.034} \\ 1.280 \(\pm\) 0.080\end{tabular}
&\begin{tabular}{@{}c@{}}\textbf{0.452 \(\pm\) 0.044} \\ 1.900 \(\pm\) 0.537\end{tabular}
&\begin{tabular}{@{}c@{}}\textbf{0.565 \(\pm\) 0.053} \\ 2.621 \(\pm\) 0.257\end{tabular}
\\
\cline{3-6}
\multicolumn{1}{c}{L9}
&\multicolumn{1}{c}{}
&\multicolumn{1}{c|}{} &\begin{tabular}{@{}c@{}}0.141 \(\pm\) 0.004 \\ \textbf{0.120 \(\pm\) 0.005}\end{tabular}
&\begin{tabular}{@{}c@{}}\textbf{0.080 \(\pm\) 0.003} \\ 0.099 \(\pm\) 0.011\end{tabular}
&\begin{tabular}{@{}c@{}}\textbf{0.099 \(\pm\) 0.007} \\ 0.307 \(\pm\) 0.013\end{tabular}
\\
\cline{4-6}
\multicolumn{1}{c}{L17}
&\multicolumn{1}{c}{}
&\multicolumn{1}{c}{}
&\multicolumn{1}{c|}{}
&\begin{tabular}{@{}c@{}}\textbf{0.102 \(\pm\) 0.006} \\ 0.104 \(\pm\) 0.006\end{tabular}
&\begin{tabular}{@{}c@{}}\textbf{0.121 \(\pm\) 0.011} \\ 0.324 \(\pm\) 0.012\end{tabular}
\\
\cline{5-6}
\multicolumn{1}{c}{L25}
&\multicolumn{1}{c}{}
&\multicolumn{1}{c}{}
&\multicolumn{1}{c}{}
&\multicolumn{1}{c|}{}
&\begin{tabular}{@{}c@{}}\textbf{0.051 \(\pm\) 0.006} \\ 0.120 \(\pm\) 0.006\end{tabular}
\\
\cline{6-6}
\end{tabular}
\end{tabular}
\end{minipage}
\label{table-relative-error}
\end{table*}

\begin{definition}[Noise Robustness]
\label{definition-noise-robustness}
For
given realization of the
noise sample
\(z\),
let \(\zeta_{k,\ell}^{\delta}\left(x, y\right)\)-robustness be defined as
\begin{equation}
	\begin{aligned}
		\zeta_{k,\ell}^{\delta}\left(x, y\right)
		&\coloneq \frac{\norm{\phi_{k,\ell}\left(x\right) - \phi_{k,\ell}\left(y\right)}_{1}}{\norm{\phi_{k,\ell}\left(x\right)}_{1}}.
	\end{aligned}
\label{noise-robustness-definition}
\end{equation}
\end{definition}

Thus \(\zeta_{k,\ell}^{\delta}\left(x, y\right)\) measures the relative discrepancy between \(\phi_{k,\ell}\left(x\right)\) and \(\phi_{k,\ell}\left(y\right)\) when noise factor \(\delta\) is used, and
effectively represents the noise's attenuation from layer \(k\) to layer \(\ell\). Averaging \(\zeta_{k,\ell}^{\delta}\left(x, y\right)\) over all data points, and over all units in the \(k\)-th and \(\ell\)-th layers, leads to a consolidated estimate of the noise robustness.

\begin{wrapfigure}[20]{r}{0.48\textwidth}
  \centering
  \begin{minipage}[t]{0.47\textwidth}
    \begin{subfigure}[b]{0.48\linewidth}
	  \includegraphics[width=\linewidth]{./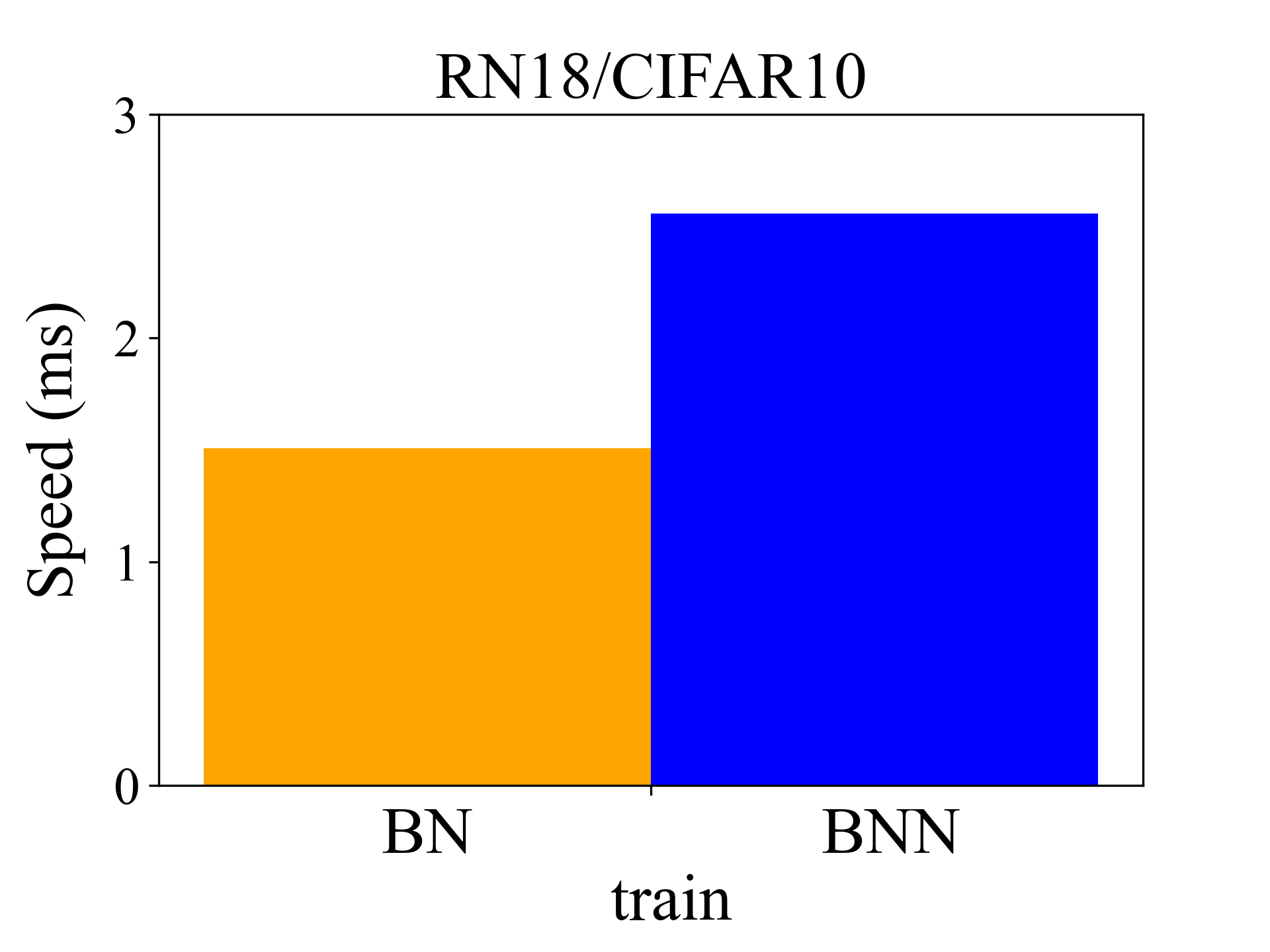}
    \end{subfigure}
    \begin{subfigure}[b]{0.48\linewidth}
      \includegraphics[width=\linewidth]{./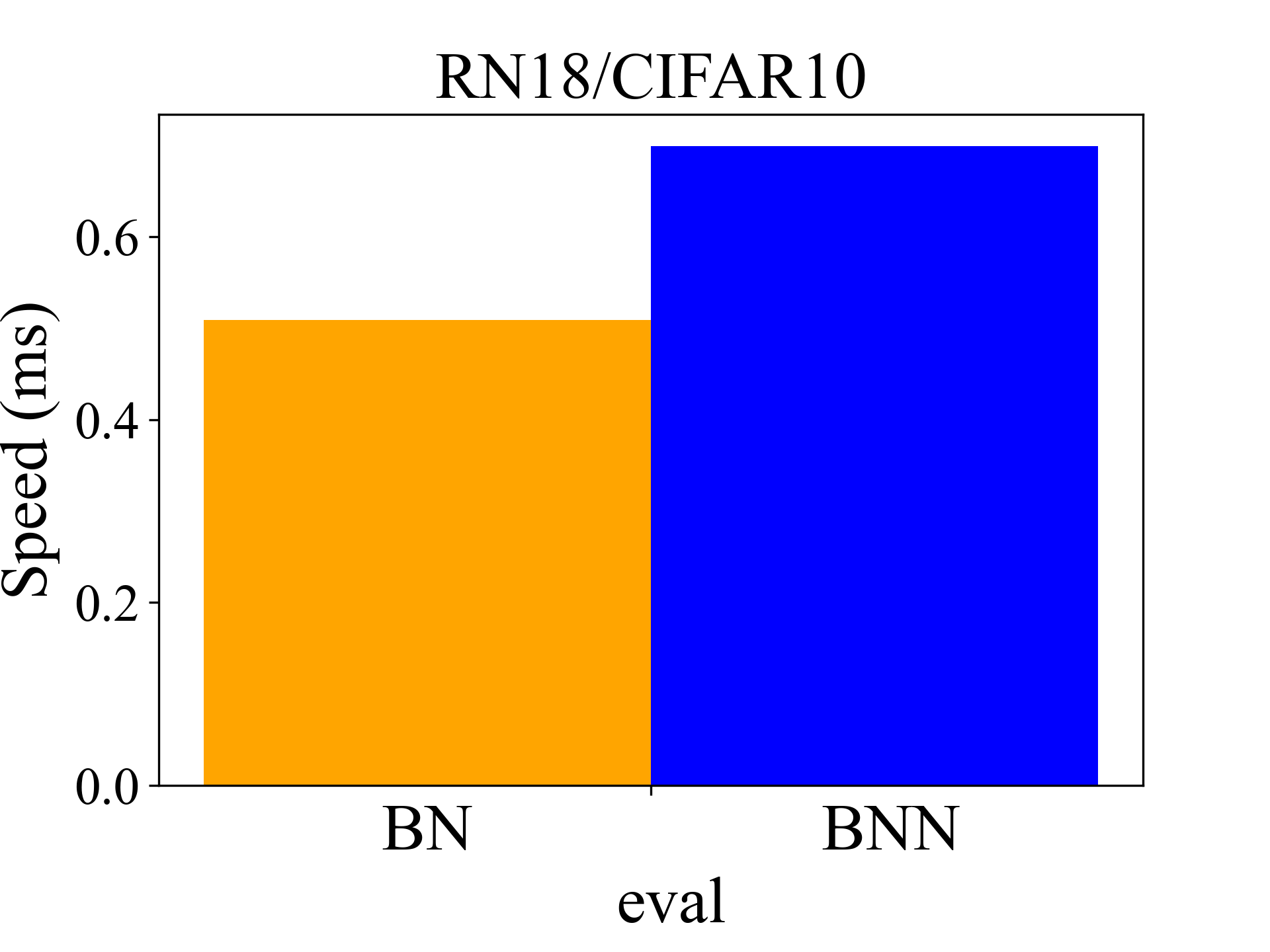}
    \end{subfigure}
  \end{minipage}
  \begin{minipage}[t]{0.47\textwidth}
    \begin{subfigure}[b]{0.48\linewidth}
      \includegraphics[width=\linewidth]{./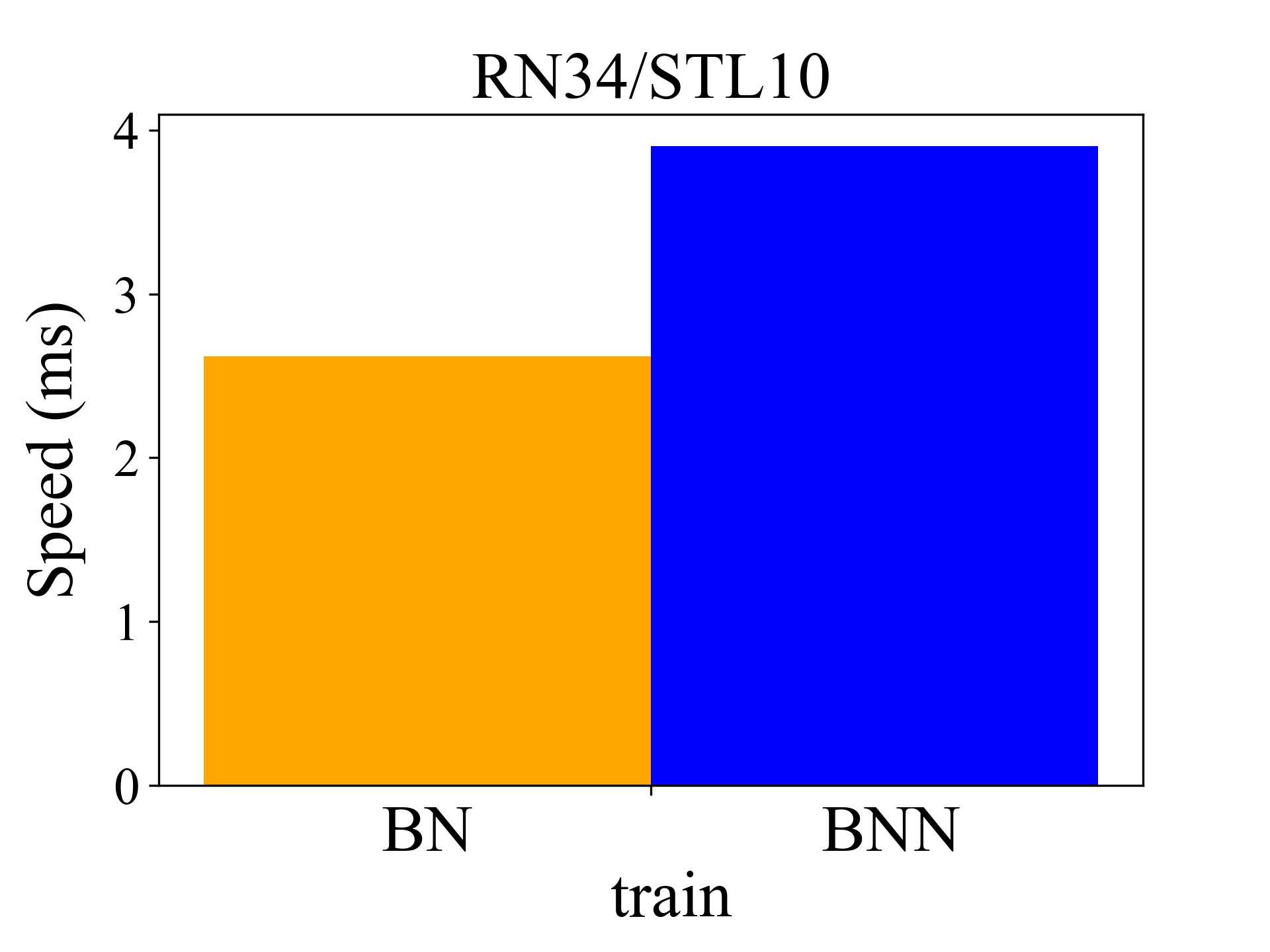}
    \end{subfigure}
    \begin{subfigure}[b]{0.48\linewidth}
      \includegraphics[width=\linewidth]{./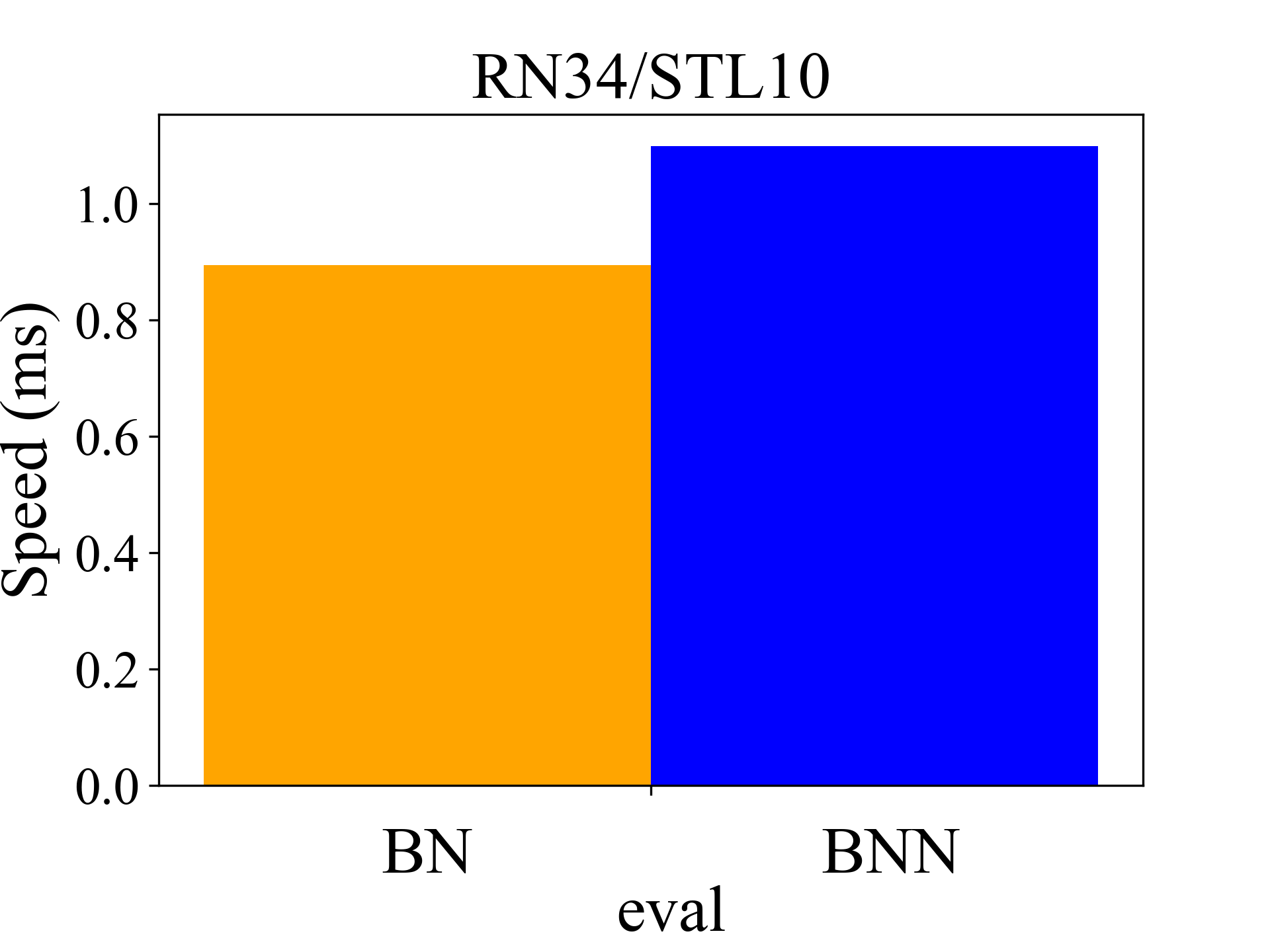}
    \end{subfigure}
    \vspace{-3.4mm}
    \caption{Runtime comparison between models using BatchNormalNorm (BNN) and BatchNorm (BN) for two sets of model \& dataset combinations; top: ResNet18/CIFAR10, bottom: ResNet34/STL10.
    The left hand plot shows the running time during training, and the right hand plot shows the running time during evaluation. See text for details.
    }
    \label{figure-speed}
  \end{minipage}
\end{wrapfigure}
Table \ref{table-relative-error}
demonstrates
the increased robustness to noise obtained when using
BatchNormalNorm
in comparison to
BatchNorm.
This
substantiates the applicability of the noise robustness
framework presented in
Motivation
Subsection \ref{mutual-information-game},
and
consequently of
the benefit of gaussianizing
representations
in normality normalization.

\pdfbookmark[2]{Speed Benchmarks}{bookmark-appendix-speed-benchmarks}
\subsection{Speed Benchmarks}
\label{appendix-speed-benchmarks}
Figure \ref{figure-speed} shows the average per-sample running time for models using BatchNormalNorm and BatchNorm. The values are calculated by taking the average minibatch runtime at train/evaluation time, for the entire training/validation set, then normalizing by the number of samples in the minibatch. Values are obtained using an NVIDIA V100 GPU.
For the purposes of these
benchmarks,
synchronization between the CPU and GPU was enforced.

The plots show
that normality normalization increases runtime;
with a larger deviation at training time than at test time.
It is worth noting however, that
the present work serves as a foundation, both conceptual and methodological, for future works which may continue to leverage the benefits of gaussianizing. We believe improvements to the runtime of normality normalization can be obtained
by leveraging approximations to the operations performed in the present form of normality normalization; for example the operations $\log\left(1+h\right)$,
and raising to the power.

\begin{figure*}[h!]
  \centering
  \begin{minipage}[b]{0.52\textwidth}
  \centering
    \begin{subfigure}[b]{0.32\linewidth}
	  \includegraphics[width=\linewidth]{./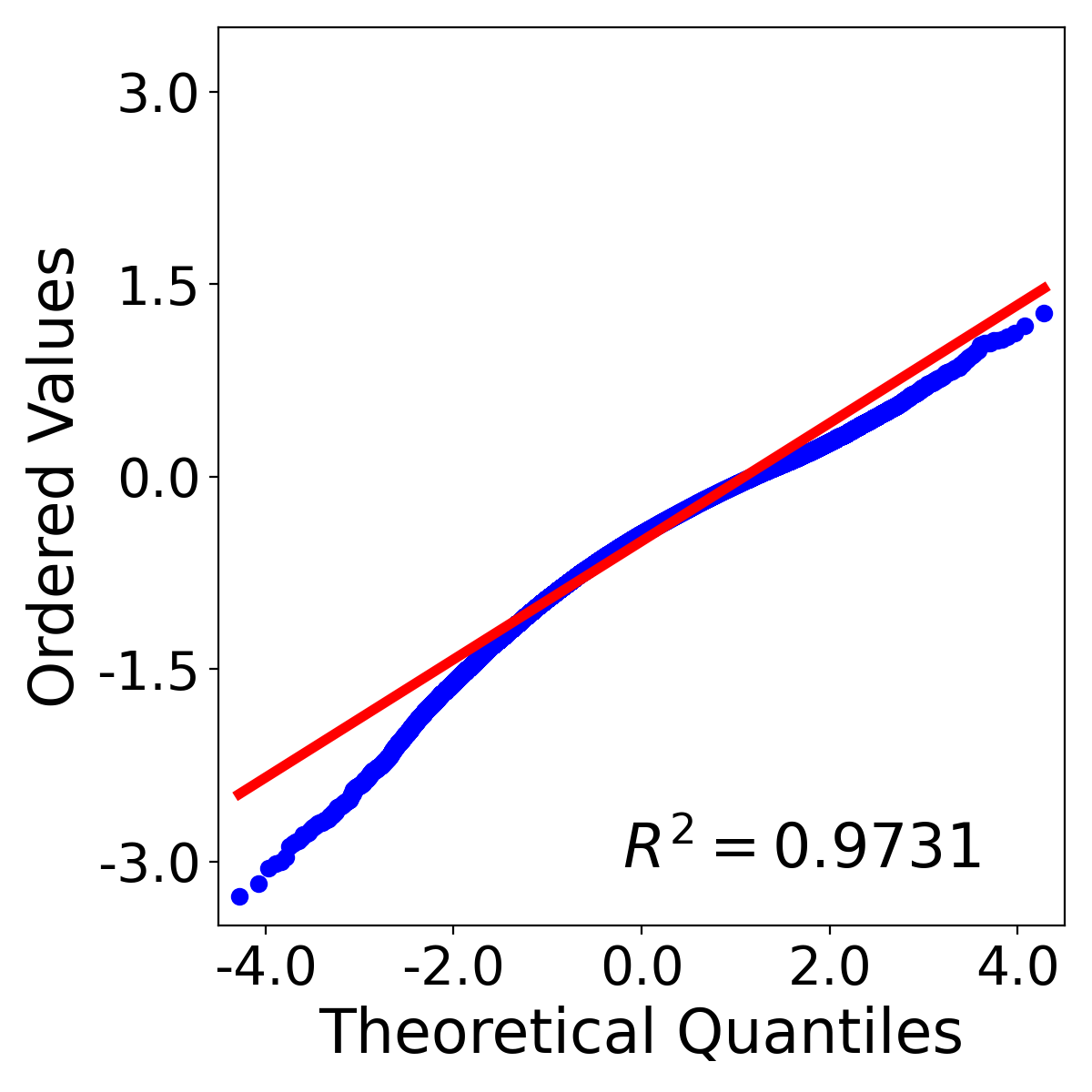}
    \end{subfigure}
    \begin{subfigure}[b]{0.32\linewidth}
      \includegraphics[width=\linewidth]{./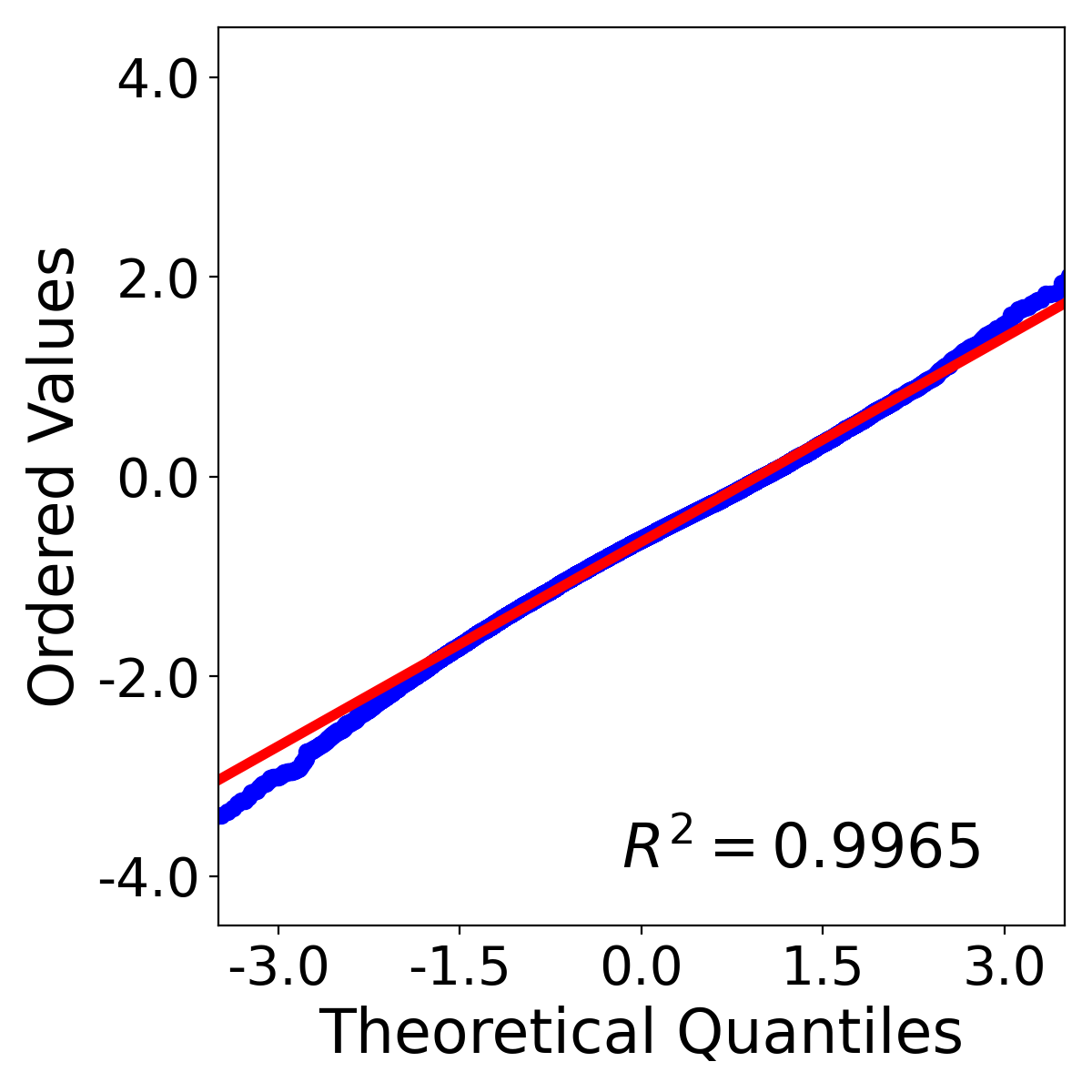}
    \end{subfigure}
    \begin{subfigure}[b]{0.32\linewidth}
      \includegraphics[width=\linewidth]{./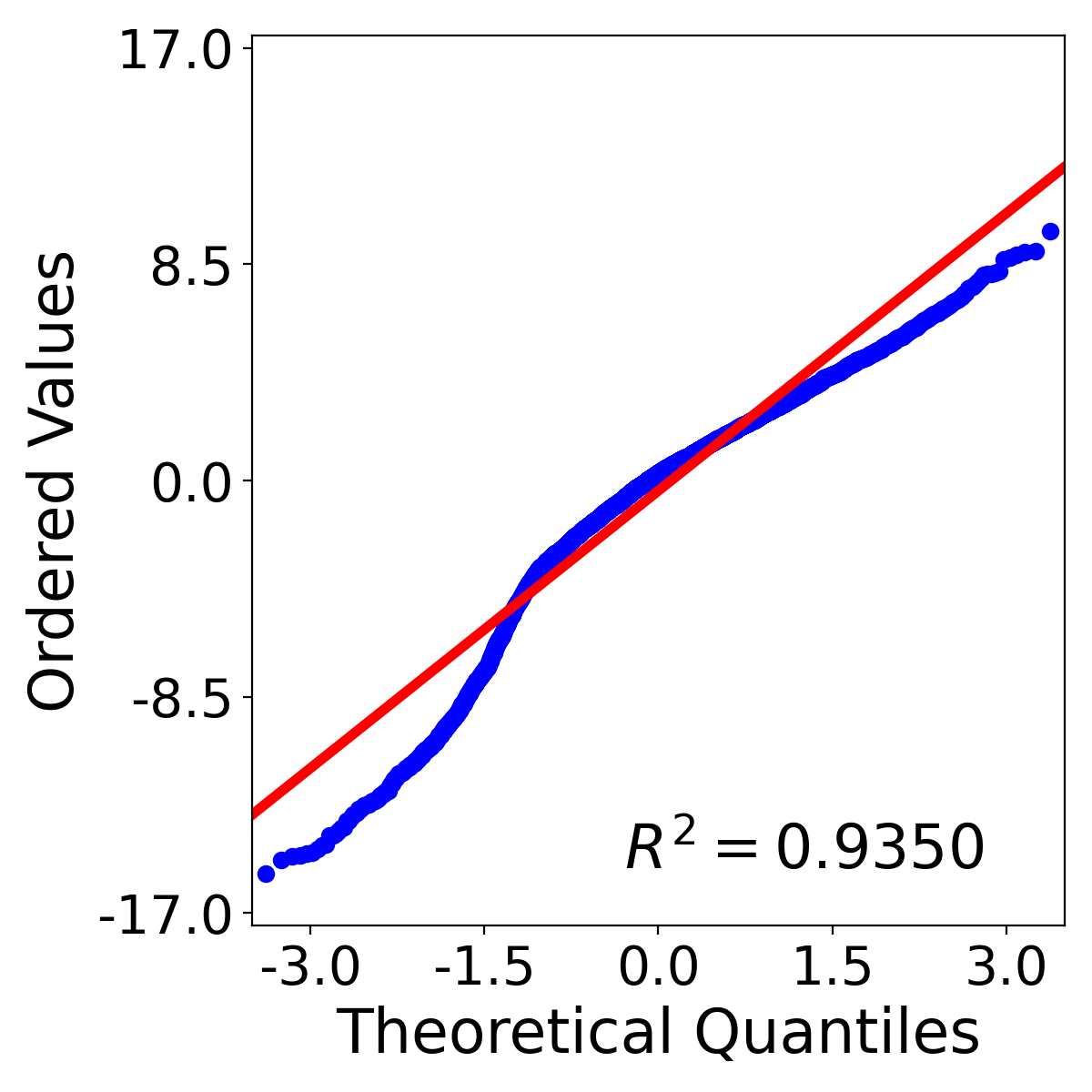}
    \end{subfigure}

    \begin{subfigure}[b]{0.32\linewidth}
	  \includegraphics[width=\linewidth]{./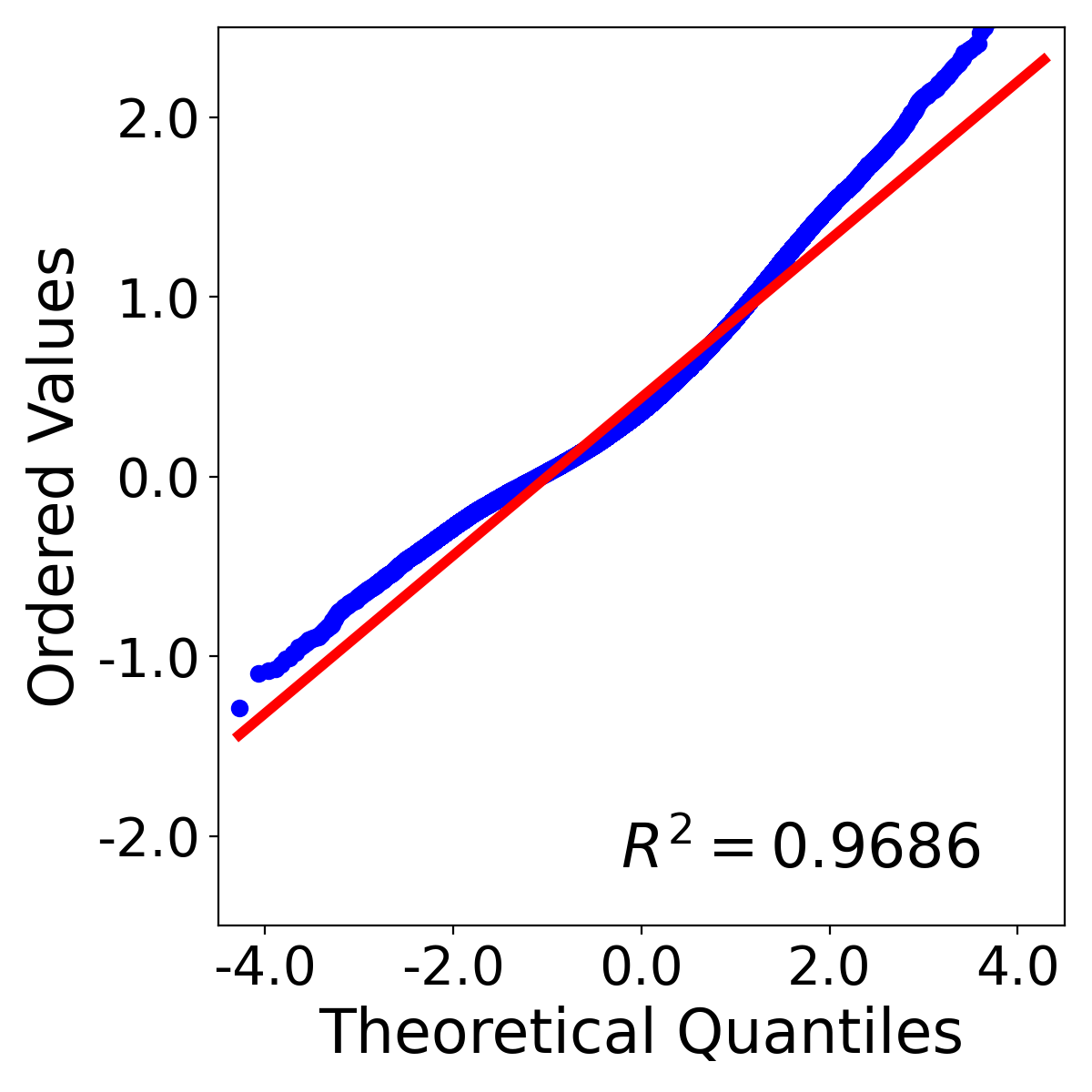}
    \end{subfigure}
    \begin{subfigure}[b]{0.32\linewidth}
      \includegraphics[width=\linewidth]{./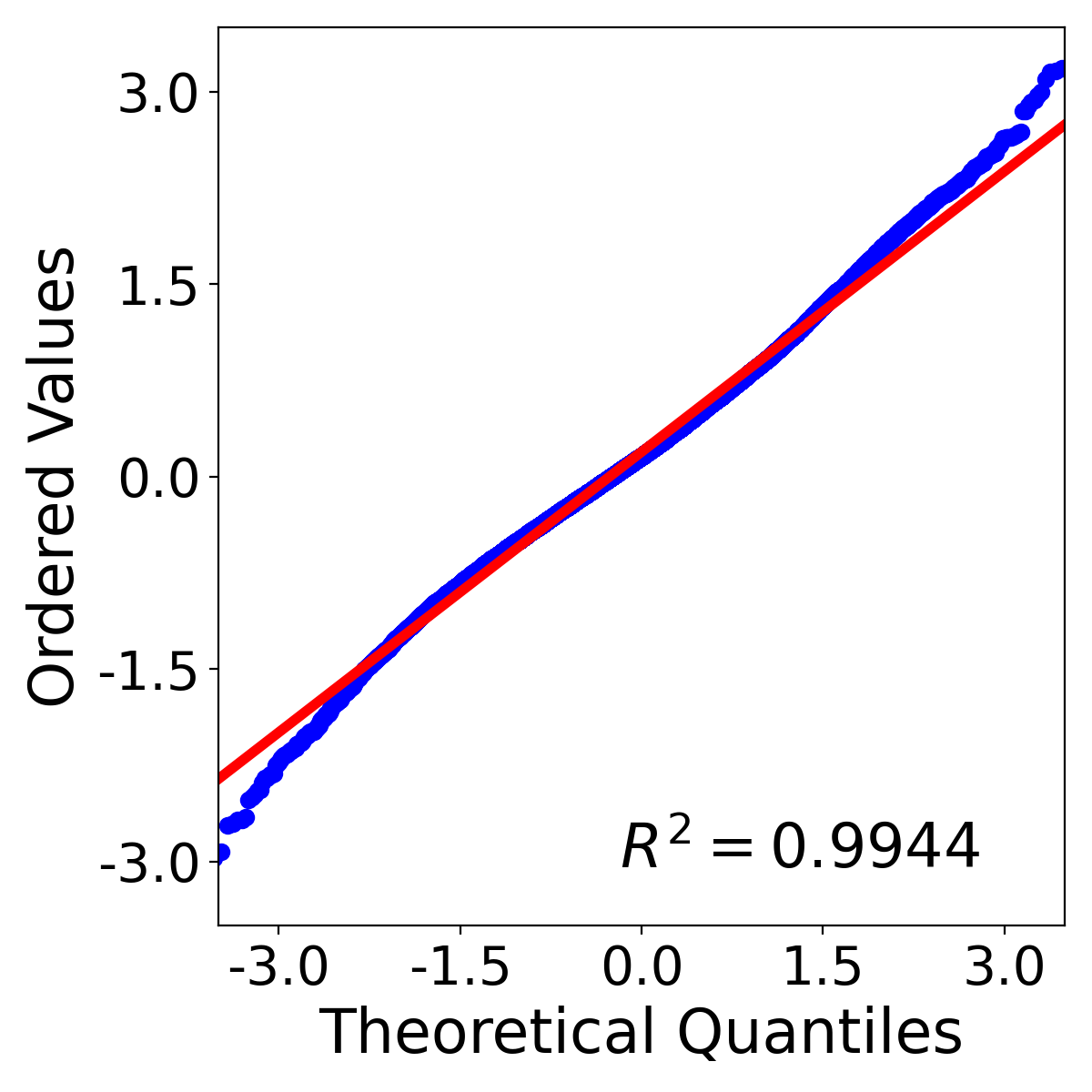}
    \end{subfigure}
    \begin{subfigure}[b]{0.32\linewidth}
      \includegraphics[width=\linewidth]{./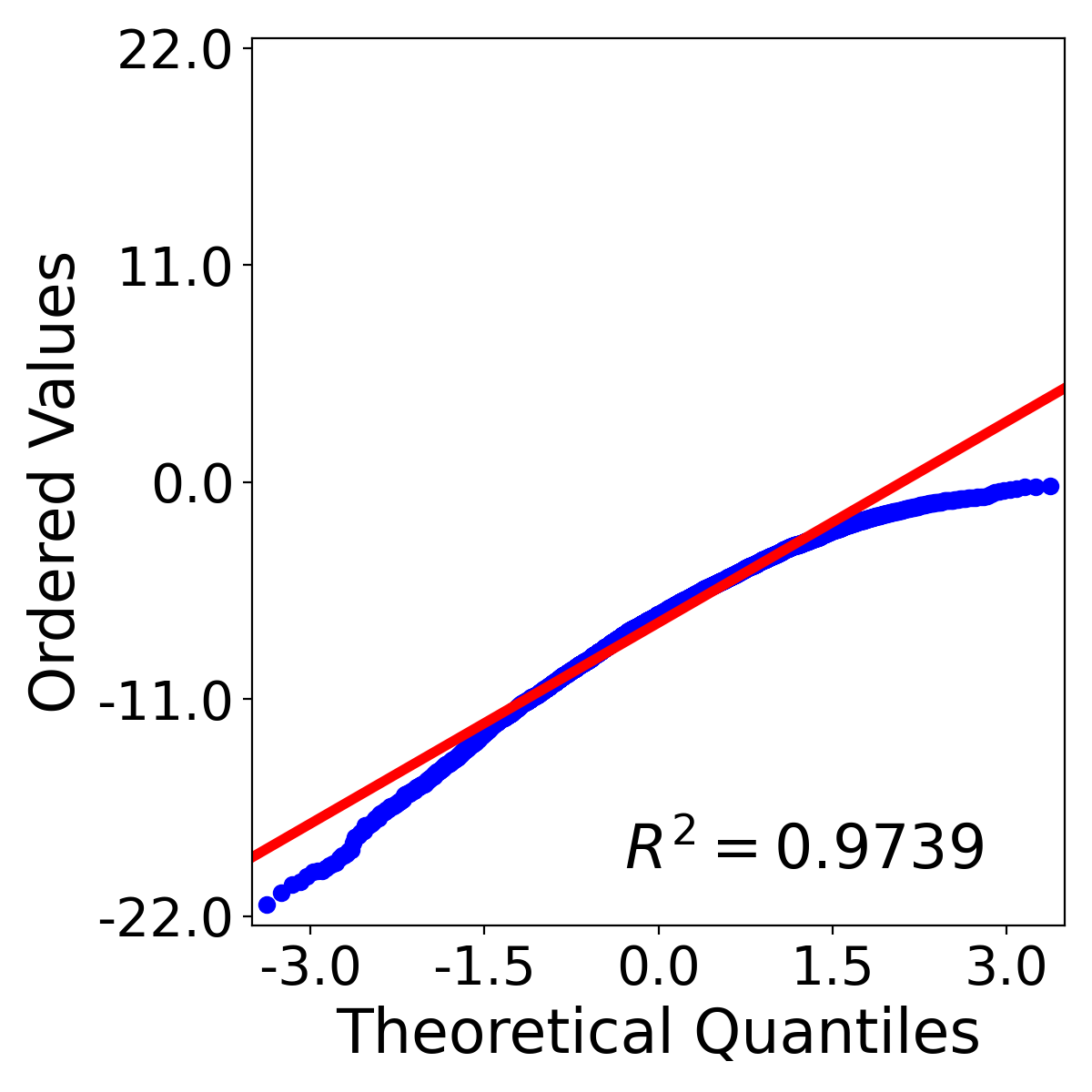}
    \end{subfigure}
    \end{minipage}
    \hfill
  \begin{subfigure}{0.42\textwidth}
  \centering
 \includegraphics[width=\linewidth]{./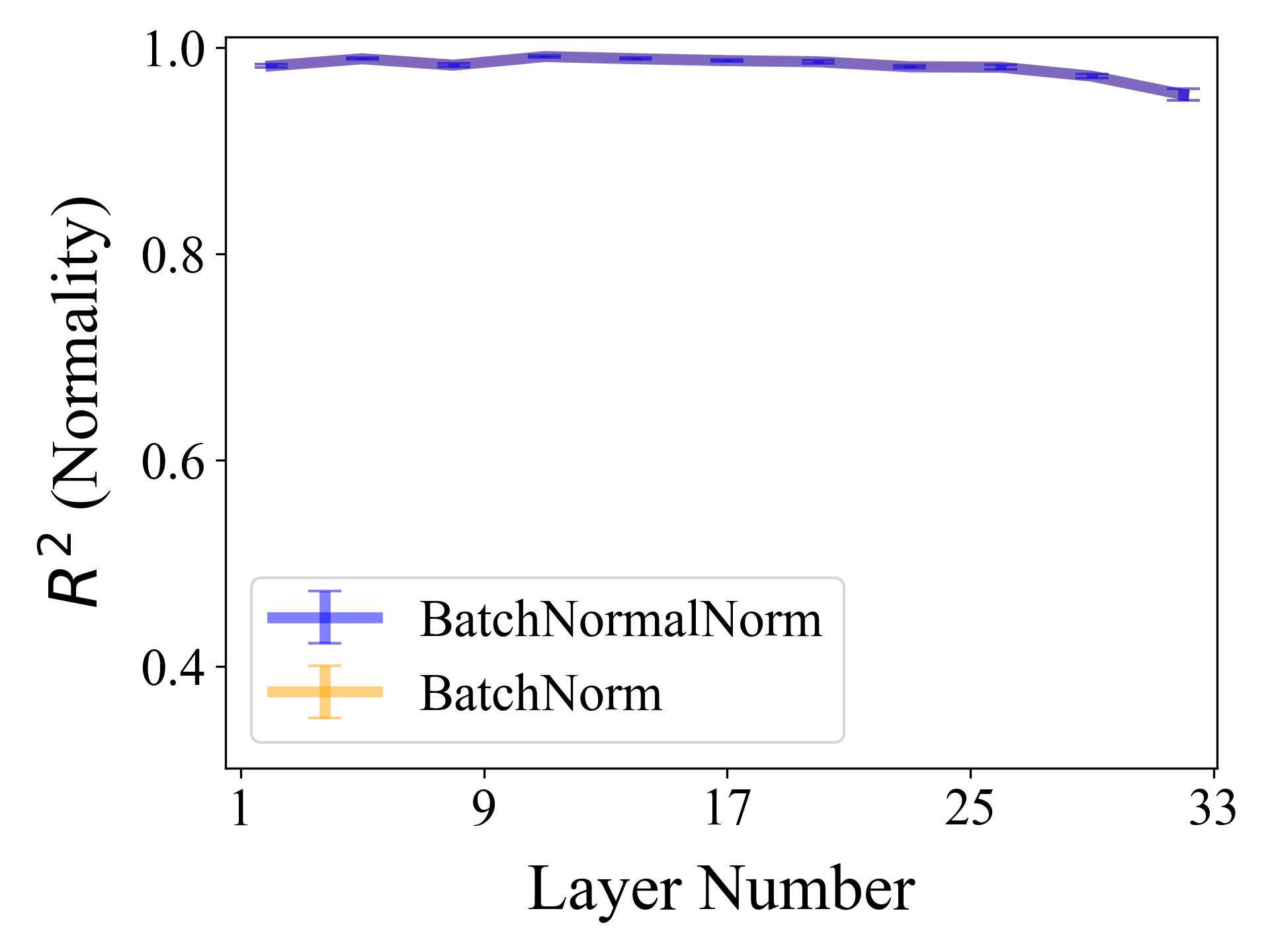}
    \end{subfigure}
    \vspace{-3.4mm}
    \caption{As in Figure \ref{figure-qq} and Figure \ref{figure-r2},
    but for networks at initialization.
    The plots
    provide a graphical illustration of the fact that
    at initialization, networks
    using
    either BatchNormalNorm or BatchNorm have close-to Gaussian pre-activations. However, as the networks are trained, BatchNormalNorm enforces and
    maintains
    normality while BatchNorm does not, as evidenced by Figure \ref{figure-r2}.
    }
    \label{figure-initialization-qq-r2}
\end{figure*}

\pdfbookmark[2]{Normality at Initialization}{bookmark-appendix-normality-initialization}
\subsection{Normality at Initialization}
\label{appendix-normality-initialization}
Figure \ref{figure-initialization-qq-r2} shows representative
Q--Q
plots,
together with an aggregate measure of normality across model layers, for post-power transform feature values when using BatchNormalNorm, and post-normalization values when using BatchNorm,
for models at initialization.
It
provides a graphical illustration of the fact that
at initialization, the pre-activations are
close
to
Gaussian regardless of the normalization layer employed;
and thus
that only the model trained with BatchNormalNorm
enforces and
maintains
normality throughout training,
as evidenced by Figure \ref{figure-r2}.
Note that the Q--Q plots presented in Figure \ref{figure-qq}
and Figure \ref{figure-initialization-qq-r2}
are obtained for the same corresponding minibatch and channel combinations.

\pdfbookmark[2]{Joint Normality and Independence Between Features}{bookmark-appendix-joint-normality-independence}
\subsection{Joint Normality and Independence Between Features}
\label{appendix-joint-normality-independence}
Following the motivation we presented in Subsection \ref{text-maximally-independent},
here we
explore the potential effect normality normalization may have on increasing joint normality in the features, and the extent to which it may increase the independence between features.

We
use the
following experimental setup.
For each layer of a
ResNet34/STL10 model
trained to convergence
using either BatchNormalNorm or BatchNorm,
we compute
the
correlation,
joint normality,
and mutual information
over
$10$
pairs of channels,
and
across $10$
validation minibatches.

We evaluate joint normality
using the negative of the HZ-statistic \citep{doi:10.1080/03610929008830400} (higher values indicate greater joint normality),
and evaluate independence using the
adjusted mutual information (AMI) metric\footnote{The AMI is a variation of mutual information, which adjusts for random chance. It is also bounded between $0$ and $1$, which makes it easier to interpret.}
\citep{JMLR:v11:vinh10a}
(lower values indicate a greater degree of independence).

We evaluate
joint normality
across pairs of channels rather than across all of the channels in a layer, because
measures of joint normality are
sensitive to
small deviations
in sample statistics
for finite sample sizes
\citep{doi:10.1080/02664763.2013.839637,EbnerHenze2020_1000129679}.
Wherever we measure AMI, we use the square root of the number of sampled features as the number of bins (a generally accepted rule of thumb) when discretizing the features, and we use uniform binning, which is appropriate for (close to) normally distributed data.

Figure \ref{figure-corr-joint-mi} demonstrates that models trained with BatchNormalNorm have higher joint normality, and have greater independence,
across the model's layers.
This is of value in context of the benefit
feature independence is thought to provide, which was explored in
Motivation
Subsection \ref{text-maximally-independent}.

\begin{figure}[h!]
  \centering
  \begin{minipage}[t]{0.99\textwidth}
  \centering
    \begin{subfigure}[b]{0.32\linewidth}
	  \includegraphics[width=\linewidth]{./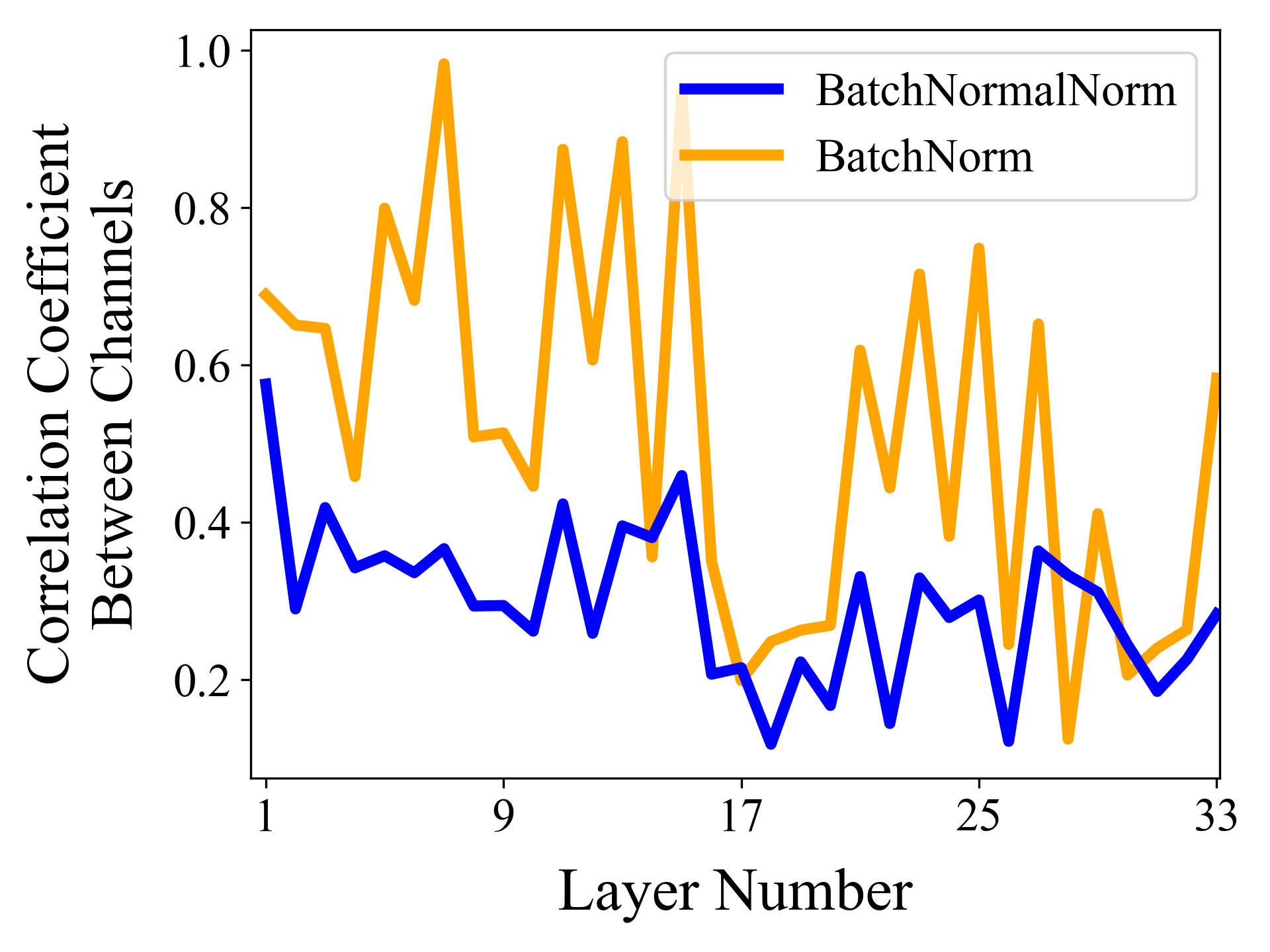}
    \end{subfigure}
    \begin{subfigure}[b]{0.32\linewidth}
      \includegraphics[width=\linewidth]{./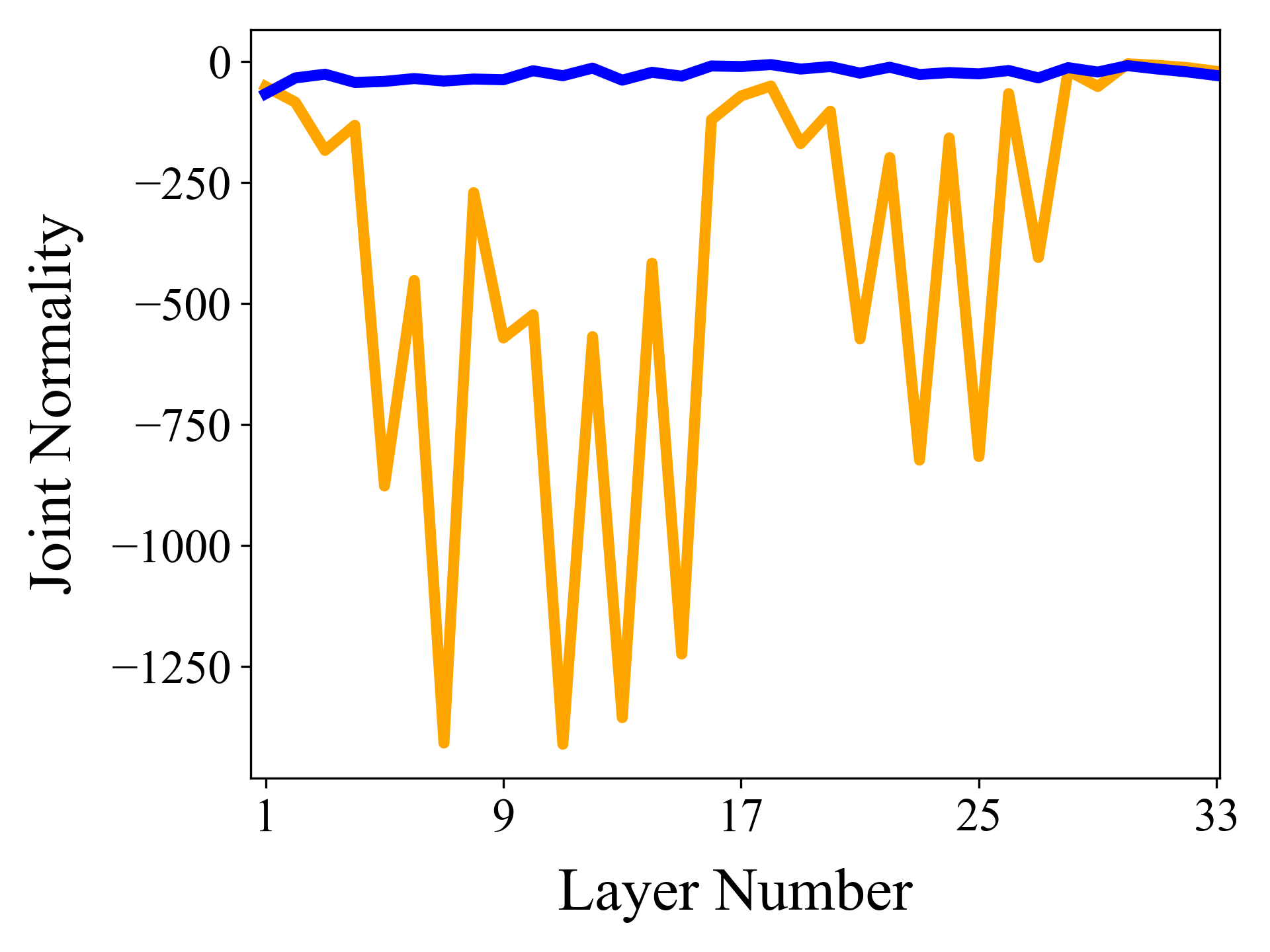}
    \end{subfigure}
    \begin{subfigure}[b]{0.32\linewidth}
      \includegraphics[width=\linewidth]{./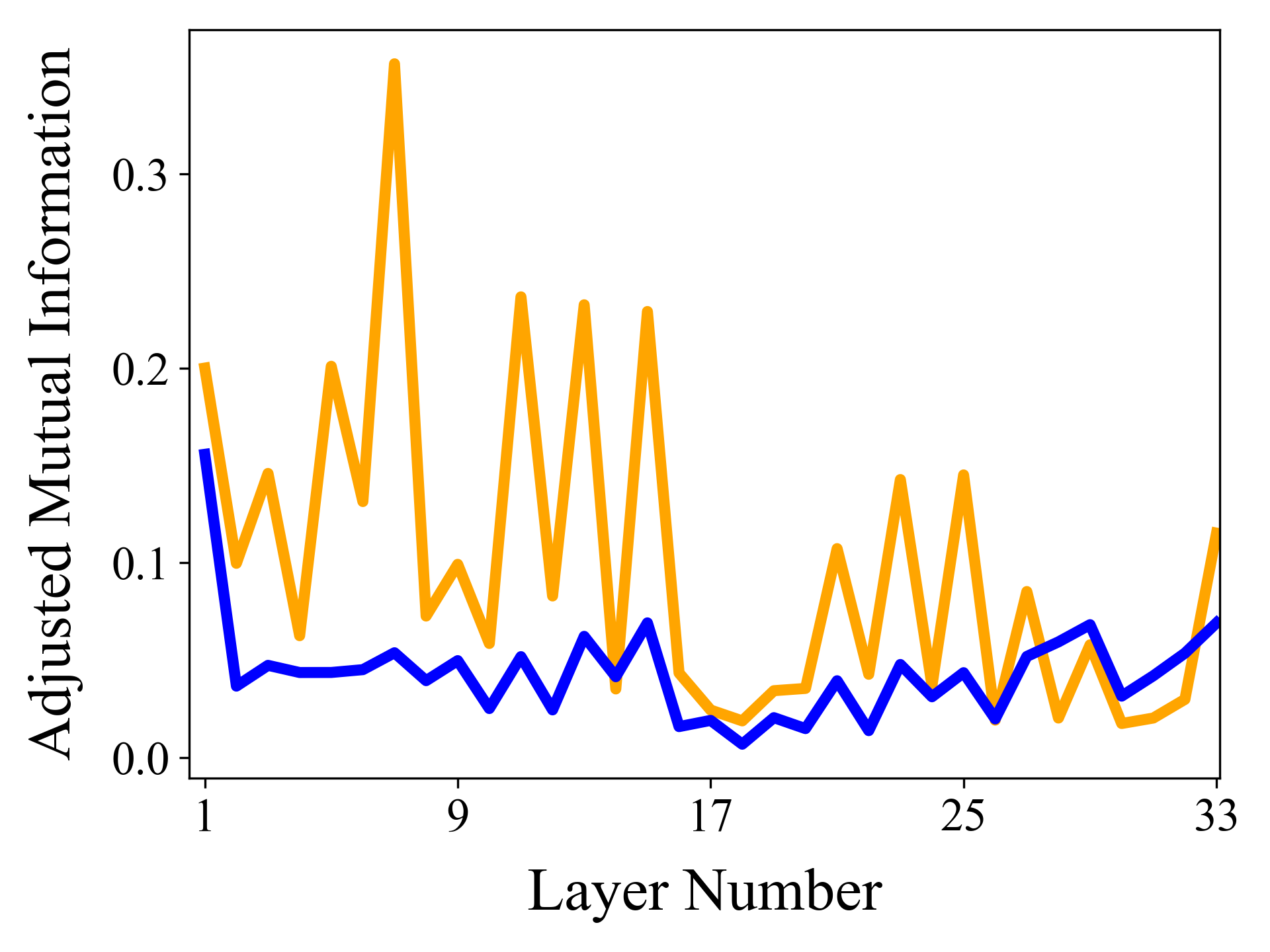}
    \end{subfigure}
  \end{minipage}
    \vspace{-3.4mm}
    \caption{\textbf{Normality normalization induces greater feature independence.}
    Correlation, joint normality, and adjusted mutual information between pairs of channels
    for models trained to convergence
    using
    BatchNormalNorm vs. BatchNorm (ResNet34/STL10).
    The results
    are obtained by averaging the corresponding statistics across $10$ channel pairs, and across $10$ validation minibatches.
    Here joint normality is quantified using the negative of the HZ-statistic.
    }
    \label{figure-corr-joint-mi}
\end{figure}

\pdfbookmark[1]{Lemmas}{bookmark-minimal-mi-given-marginals}
\section{Lemmas}
\label{minimal-mi-given-marginals}
\begin{lemma}
\label{lemma-minimal-mi-given-marginals}
Bivariate Normality Minimizes Mutual Information.
Let $X_{1} \sim \mathcal{N}\left(x_{1}; \mu_{1}, \sigma_{1}^{2}\right)$ and $X_{2} \sim \mathcal{N}\left(x_{2}; \mu_{2}, \sigma_{2}^{2}\right)$.
Their mutual information $I\left(X_{1};X_{2}\right)$ is minimized when the random variables are furthermore jointly normally distributed, i.e. $\left(X_{1}, X_{2}\right)\sim \mathcal{N}\left(\bm{x}; \bm{\mu}, \bm{\Sigma}\right)$, with $\bm{x} = \begin{bmatrix}x_{1}\\x_{2}\end{bmatrix}$, $\bm{\mu}=\begin{bmatrix}\mu_{1}\\\mu_{2}\end{bmatrix}$, $\bm{\Sigma} = \begin{bmatrix} \sigma_{1}^{2} & \rho\sigma_{1}\sigma_{2}\\\rho\sigma_{1}\sigma_{2} & \sigma_{2}^{2} \end{bmatrix}$, and $\rho$ the correlation coefficient between $X_{1}, X_{2}$.
\end{lemma}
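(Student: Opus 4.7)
The plan is to reduce the claim to the classical maximum-entropy characterization of the multivariate normal. Write the mutual information in its entropy form,
\begin{equation*}
I(X_{1};X_{2}) \;=\; h(X_{1}) + h(X_{2}) - h(X_{1},X_{2}),
\end{equation*}
so that under the lemma's hypothesis (fixed Gaussian marginals and a fixed correlation $\rho$), the marginal entropies $h(X_{1}) = \tfrac{1}{2}\log(2\pi e \sigma_{1}^{2})$ and $h(X_{2}) = \tfrac{1}{2}\log(2\pi e \sigma_{2}^{2})$ are determined constants. Hence minimizing $I(X_{1};X_{2})$ over the class of joint distributions compatible with the given marginals and correlation is equivalent to maximizing the joint differential entropy $h(X_{1},X_{2})$.

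Next, I would observe that any admissible joint distribution has mean vector $\bm{\mu}$ and covariance matrix $\bm{\Sigma}$ pinned down by the Gaussian marginals together with $\rho$. The classical maximum-entropy theorem states that among all continuous bivariate distributions on $\mathbb{R}^{2}$ with prescribed mean vector and covariance matrix, the unique maximizer of differential entropy is the Gaussian $\mathcal{N}(\bm{\mu},\bm{\Sigma})$. This gives $h(X_{1},X_{2}) \le \tfrac{1}{2}\log\bigl((2\pi e)^{2}\det\bm{\Sigma}\bigr)$, with equality precisely when $(X_{1},X_{2})$ is jointly normal. Substituting back yields
\begin{equation*}
I(X_{1};X_{2}) \;\ge\; -\tfrac{1}{2}\log(1-\rho^{2}),
\end{equation*}
attained at the joint normal, which proves the lemma.

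The only subtle point, and the one I would be careful to state explicitly, is that the minimization is taken over joints having the specified (Gaussian) marginals \emph{and} a given correlation $\rho$; otherwise the infimum would trivially be $0$, achieved by the independent Gaussian ($\rho=0$), and the claim would not capture the paper's intended interpretation that, at any fixed level of correlation, joint normality makes the variables ``maximally independent.'' The proof is otherwise a direct application of the maximum-entropy lemma, with no technical obstacle beyond invoking it.
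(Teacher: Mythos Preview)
Your proof is correct and follows essentially the same route as the paper's: decompose $I(X_{1};X_{2})$ into marginal and joint entropies, note that the marginal entropies are fixed by the Gaussian-marginal assumption, and then invoke the maximum-entropy property of the multivariate normal under a fixed covariance to bound the joint entropy and obtain $I(X_{1};X_{2}) \ge -\tfrac{1}{2}\log(1-\rho^{2})$. Your additional remark about why $\rho$ must be held fixed is a useful clarification that the paper's proof leaves implicit.
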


\begin{proof}
Consider two possible distributions, \(f,g\), for the joint distribution over \(\left(X_{1},X_{2}\right)\), where \(f\) denotes the probability density function (PDF) of the bivariate normal distribution, and \(g\) can be any joint distribution. Our goal is to show that the mutual information between \(X_{1},X_{2}\), when they are distributed according to \(g\), is lower-bounded by the mutual information between \(X_{1},X_{2}\) when they are distributed according to \(f\).

For clarity of presentation, let the number of variables \(f\) and \(g\) take as arguments be clear from context, so that it is understood when they are used to denote their marginal distributions. Furthermore let \(I_{g}\left(X_{1};X_{2}\right)\) represent the mutual information when \(\left(X_{1}, X_{2}\right)\) are distributed according to \(g\), with the notation extending analogously to their joint \(h_{g}\left(X_{1},X_{2}\right)\) and marginal \(h_{g}\left(X_{1}\right)\), \(h_{g}\left(X_{2}\right)\) entropies under \(g\).

We then have
\begin{equation}
\begin{aligned}
	I_{g}\left(X_{1};X_{2}\right)
	&= h_{g}\left(X_{1}\right) + h_{g}\left(X_{2}\right) - h_{g}\left(X_{1},X_{2}\right)\\
	&= h_{f}\left(X_{1}\right) + h_{f}\left(X_{2}\right) - h_{g}\left(X_{1},X_{2}\right)\\
	&\ge h_{f}\left(X_{1}\right) + h_{f}\left(X_{2}\right) - h_{f}\left(X_{1},X_{2}\right)\\
	&= I_{f}\left(X_{1};X_{2}\right)\\
	&= \frac{1}{2}\log\left(2\pi e\sigma_{1}^{2}\right) + \frac{1}{2}\log\left(2\pi e\sigma_{2}^{2}\right) - \frac{1}{2}\log\left(\left(2\pi e\right)^{2}\left(1-\rho^{2}\right)\sigma_{1}^{2}\sigma_{2}^{2}\right)\\
	&= \frac{1}{2}\log\left(\frac{1}{1-\rho^{2}}\right),
\end{aligned}
\end{equation}
where the second equality follows because by assumption the marginals are normally distributed, the inequality follows because the normal distribution maximizes entropy,
and in the second-last equality we have used the expressions for
the entropies of the univariate and bivariate normal distributions.
\label{lemma-bivariate-normality}
\end{proof}

Consequently,
when the random variables are jointly normally distributed, \(\rho=0\) implies \(I\left(X_{1};X_{2}\right) = 0\); thus uncorrelatedness implies independence.\footnote{The preceding result extends straightforwardly to the general multivariate setting, i.e. with more than two random variables.}

\pdfbookmark[1]{Derivation of the Power Transform Negative Log-Likelihood}{bookmark-appendix-nll-derivation}
\section{Derivation of the Power Transform Negative Log-Likelihood}
\label{appendix-nll-derivation}
As in Section \ref{background},
consider a random variable $H$
from which a sample
$\bm{h} = \left\{h_{i}\right\}_{i=1}^{N}$ is obtained. Recall that the power transform gaussianizes $\bm{h}$ by applying Equation \ref{transform} for each $h_{i}$,
where the parameter $\lambda$ is obtained using maximum likelihood estimation,
so that the transformed variable is as normally distributed as possible.

Denote the transformed random variable as $X \sim \mathcal{N}\left(x; \mu,\sigma^{2}\right)$, where $x = \psi\left(h; \lambda\right)$, and $\mu = \mu\left(\lambda\right)$, $\sigma^{2} =\sigma^{2}\left(\lambda\right)$; i.e. $\mu$ and $\sigma^{2}$ are in general functions of $\lambda$.
Obtaining the
maximum likelihood estimates
for $\left(\mu, \sigma^{2}, \lambda\right)$, shown next, requires evaluating the probability density function (PDF) of $H$, given the transformed variable $X$ is normally distributed.

Evaluating the cumulative distribution function (CDF) of $H$ gives\footnote{To simplify the presentation of the derivation,
we omit the cases where \(\lambda=0\) and \(\lambda=2\),
and outline the NLL for \(h \ge 0\) only, as the case for \(h < 0\) follows closely by symmetry.}
\begin{equation}
\begin{aligned}
	F_{H}\left(h\right)
	&= P\left(H \le h\right)\\
	&= P\left(\left(1+\lambda X\right)^{1/\lambda}-1 \le h\right)\\
	&= P\left(X \le \frac{1}{\lambda}\left(\left(1+h\right)^{\lambda} - 1\right)\right)\\
	&= F_{X}\left(\frac{1}{\lambda}\left(\left(1+h\right)^{\lambda} - 1\right)\right).
\end{aligned}
\label{eqn-cdf}
\end{equation}

Differentiating Equation \ref{eqn-cdf} gives the PDF of $H$:
\begin{equation}
\begin{aligned}
	f_{H}\left(h;\mu\left(\lambda\right),\sigma^{2}\left(\lambda\right),\lambda\right)
	&= \frac{d}{dh}F_{H}\left(h\right)\\
	&= \frac{d}{dh}\left(\frac{1}{2}\left(1 + \text{erf}\left(\frac{\frac{1}{\lambda}\left(\left(1+h\right)^{\lambda} - 1\right)-\mu\left(\lambda\right)}{\sigma\left(\lambda\right)\sqrt{2}}\right)\right)\right)\\
	&=\left(1+h\right)^{\lambda-1}\frac{1}{\sqrt{2\pi\sigma^2\left(\lambda\right)}}\exp\left(\frac{-1}{2\sigma^2\left(\lambda\right)}\left(\frac{\left(1+h\right)^{\lambda}-1}{\lambda}-\mu\left(\lambda\right)\right)^{2}\right)\\
	&= \left(1+h\right)^{\lambda-1} f_{X}\left(x;\mu\left(\lambda\right), \sigma^2\left(\lambda\right)\right).
\end{aligned}
\end{equation}

The negative log-likelihood (NLL) of the sample according to this distribution is
\begin{equation}
\begin{aligned}
	&\mathcal{L}\left(\bm{h};\mu\left(\lambda\right),\sigma^{2}\left(\lambda\right),\lambda\right)\\
	&= -\frac{1}{N}\log\prod_{i=1}^{N}f_{H}\left(h_{i};\mu\left(\lambda\right),\sigma^{2}\left(\lambda\right),\lambda\right)\\
	&= -\frac{1}{N}\log\prod_{i=1}^{N}\left[\left(1+h_{i}\right)^{\lambda-1}\frac{1}{\sqrt{2\pi\sigma^2\left(\lambda\right)}}\exp\left(\frac{-1}{2\sigma^2\left(\lambda\right)}\left(x_{i}-\mu\left(\lambda\right)\right)^{2}\right)\right]\\
	 &= \frac{1}{2}\log\left(2\pi\right) + \frac{1}{2}\log\left(\sigma^2\left(\lambda\right)\right)
	+ \frac{1}{2N\sigma^2\left(\lambda\right)}\sum_{i=1}^{N}\left(x_{i}-\mu\left(\lambda\right)\right)^{2}
	- \frac{\lambda-1}{N} \sum_{i=1}^{N}\log\left(1+h_{i}\right).
\end{aligned}
\end{equation}

Optimizing the NLL w.r.t. $\mu\left(\lambda\right)$ and $\sigma^{2}\left(\lambda\right)$ gives
\begin{equation}
\begin{aligned}
	\hat{\mu}\left(\lambda\right)&=\frac{1}{N}\sum_{i=1}^{N}x_{i},\\
	\hat{\sigma}^{2}\left(\lambda\right)&=\frac{1}{N}\sum_{i=1}^{N}\left(x_{i}-\hat{\mu}\left(\lambda\right)\right)^{2}.
\end{aligned}
\end{equation}

Finally re-writing the NLL using the expressions for $\hat{\mu}\left(\lambda\right)$ and $\hat{\sigma}^{2}\left(\lambda\right)$, we obtain the profile NLL \citep{pickles1985introduction}:
\begin{equation}
\begin{aligned}
	\mathcal{L}\left(\bm{h};\hat{\mu}\left(\lambda\right),\hat{\sigma}^{2}\left(\lambda\right),\lambda\right)
	&=\frac{1}{2}\left(\log\left(2\pi\right)+1\right) + \frac{1}{2}\log\left(\hat{\sigma}^2\left(\lambda\right)\right)- \frac{\lambda-1}{N} \sum_{i=1}^{N}\log\left(1+h_{i}\right).
\end{aligned}
\end{equation}

\pdfbookmark[1]{Series Expansion of the Power Transform Loss}{bookmark-series-expansion-loss}
\section{Series Expansion of the Power Transform Loss}
\label{series-expansion-loss}
Let
$\mathcal{L}_{2}\left(\bm{h};\left(\lambda,\lambda_{0}=1\right)\right)$
denote the second-order series expansion of the power transform's
negative log-likelihood
(NLL)
centered at $\lambda_{0} = 1$, i.e.
\begin{equation}
	\begin{aligned}
		\mathcal{L}_{2}\left(\bm{h};\left(\lambda,\lambda_{0}=1\right)\right)
		&= \mathcal{L}\left(\bm{h};\lambda=1\right)
		+ \left(\lambda - 1\right)\mathcal{L}'\!\left(\bm{h};\lambda=1\right)
		+ \frac{\left(\lambda - 1\right)^{2}}{2}\mathcal{L}''\!\left(\bm{h};\lambda=1\right).
	\end{aligned}
\label{loss-series}
\end{equation}

We have\footnote{To simplify the presentation, we outline the series expansion for $h \ge 0$ only, as
$h < 0$ follows closely by symmetry.}
\begin{equation}
	\begin{aligned}
		\mathcal{L}\left(\bm{h};\lambda=1\right)
		&= \mathcal{L}\left(\bm{h};\lambda\right)\Bigr|_{\lambda=1}\\
		&= \frac{1}{2}\log\left(2\pi + 1\right) + \frac{1}{2}\log\left(\hat{\sigma}^{2}\left(\lambda=1\right)\right),\\
		\mathcal{L}'\!\left(\bm{h};\lambda=1\right)
		&= \frac{\partial\mathcal{L}\left(\bm{h};\lambda\right)}{\partial\lambda}\Bigr|_{\lambda=1}\\
		&= \frac{1}{2\hat{\sigma}^{2}\left(\lambda=1\right)}\frac{\partial\hat{\sigma}^{2}\left(\lambda\right)}{\partial\lambda}\Bigr|_{\lambda=1} - \frac{1}{N}\sum_{i=1}^{N}\log\left(1+h_{i}\right),\\
		\mathcal{L}''\!\left(\bm{h};\lambda=1\right)
		&=
		\frac{\partial^{2}\mathcal{L}\left(\bm{h};\lambda\right)}{\partial\lambda^{2}}\Bigr|_{\lambda=1}\\
		&= \frac{-1}{2\left(\hat{\sigma}^{2}\left(\lambda=1\right)\right)^{2}}\left(\frac{\partial\hat{\sigma}^{2}\left(\lambda\right)}{\partial\lambda}\Bigr|_{\lambda=1}\right)^{2}
		+ \frac{1}{2\hat{\sigma}^{2}\left(\lambda=1\right)}\frac{\partial^{2}\hat{\sigma}^{2}\left(\lambda\right)}{\partial\lambda^{2}}\Bigr|_{\lambda=1},
	\end{aligned}
\label{loss-series012-lambda=1}
\end{equation}
where
\begin{equation}
	\begin{aligned}
		\frac{\partial\hat{\sigma}^{2}\left(\lambda\right)}{\partial\lambda}
		&= \frac{2}{N}\sum_{i=1}^{N}\left[\left(\psi\left(h_{i};\lambda\right) - \hat{\mu}\left(\lambda\right)\right) \left(\frac{\partial\psi\left(h_{i};\lambda\right)}{\partial\lambda} - \frac{\partial\hat{\mu}\left(\lambda\right)}{\partial\lambda}\right)\right],
	\end{aligned}
\end{equation}
\begin{equation}
	\begin{aligned}
		\therefore \frac{\partial\hat{\sigma}^{2}\left(\lambda\right)}{\partial\lambda}\Bigr|_{\lambda=1}
		&= \frac{2}{N}\sum_{i=1}^{N}\left[\left(h_{i}-\hat{\mu}\left(\lambda=1\right)\right)\left(\frac{\partial\psi\left(h_{i};\lambda\right)}{\partial\lambda}\Bigr|_{\lambda=1} - \frac{\partial\hat{\mu}\left(\lambda\right)}{\partial\lambda}\Bigr|_{\lambda=1}\right)\right],
	\end{aligned}
\end{equation}
with
\begin{equation}
	\begin{aligned}
		\frac{\partial\psi\left(h_{i};\lambda\right)}{\partial\lambda}\Bigr|_{\lambda=1}
		&= \left(1+h_{i}\right)\left(\log\left(1+h_{i}\right)\right)-h_{i},\\
		\frac{\partial\hat{\mu}\left(\lambda\right)}{\partial\lambda}\Bigr|_{\lambda=1}
		&= \frac{1}{N}\sum_{i=1}^{N}\frac{\partial\psi\left(h_{i};\lambda\right)}{\partial\lambda}\Bigr|_{\lambda=1},
	\end{aligned}
\label{dvariance}
\end{equation}
and
\begin{equation}
	\begin{aligned}
		\frac{\partial^{2}\hat{\sigma}^{2}\left(\lambda\right)}{\partial\lambda^{2}} =
		\frac{2}{N}\sum_{i=1}^{N}&\left[
		\left(\left(\psi\left(h_{i};\lambda\right) - \hat{\mu}\left(\lambda\right)\right) \left(\frac{\partial^{2}\psi\left(h_{i};\lambda\right)}{\partial\lambda^{2}} - \frac{\partial^{2}\hat{\mu}\left(\lambda\right)}{\partial\lambda^{2}}\right)\right)\right.\\
		&\left.+\left(\frac{\partial\psi\left(h_{i};\lambda\right)}{\partial\lambda} - \frac{\partial\hat{\mu}\left(\lambda\right)}{\partial\lambda}\right)^{2}\right],
	\end{aligned}
\label{ddvariance1}
\end{equation}
\begin{equation}
	\begin{aligned}
		\therefore \frac{\partial^{2}\hat{\sigma}^{2}\left(\lambda\right)}{\partial\lambda^{2}}\Bigr|_{\lambda=1} =
		\frac{2}{N}\sum_{i=1}^{N}
		&\left[\left(\left(h_{i} - \hat{\mu}\left(\lambda=1\right)\right) \left(\frac{\partial^{2}\psi\left(h_{i};\lambda\right)}{\partial\lambda^{2}}\Bigr|_{\lambda=1} - \frac{\partial^{2}\hat{\mu}\left(\lambda\right)}{\partial\lambda^{2}}\Bigr|_{\lambda=1}\right)\right)
		\right.\\
		&\left.+\left(\frac{\partial\psi\left(h_{i};\lambda\right)}{\partial\lambda}\Bigr|_{\lambda=1} - \frac{\partial\hat{\mu}\left(\lambda\right)}{\partial\lambda}\Bigr|_{\lambda=1}\right)^{2}\right],
	\end{aligned}
\end{equation}
with
\begin{equation}
	\begin{aligned}
		\frac{\partial^{2}\psi\left(h_{i};\lambda\right)}{\partial\lambda^{2}}\Bigr|_{\lambda=1} &= \left(1+h_{i}\right)\left(\log\left(1+h_{i}\right)\right)^{2} - 2\frac{\partial\psi\left(h_{i};\lambda\right)}{\partial\lambda}\Bigr|_{\lambda=1},\\
		\frac{\partial^{2}\hat{\mu}\left(\lambda\right)}{\partial\lambda^{2}}\Bigr|_{\lambda=1} &= \frac{1}{N}\sum_{i=1}^{N}\frac{\partial^{2}\psi\left(h_{i};\lambda\right)}{\partial\lambda^{2}}\Bigr|_{\lambda=1}.
	\end{aligned}
\label{ddvariance2}
\end{equation}
Furthermore, because the power transform is applied after the normalization step (see main text), \(\hat{\mu}\left(\lambda=1\right) = 0\) and \(\hat{\sigma}^{2}\left(\lambda=1\right) = 1\).

\pdfbookmark[1]{Evaluation of \texorpdfstring{$\hat{\lambda}$}{λ̂} Estimates}{bookmark-estimation-lambda}
\section{Evaluation of $\hat{\lambda}$ Estimates}
\label{estimation-lambda}
Figure \ref{nll-plots} provides representative examples substantiating the similarity between the
negative log-likelihood
(NLL)
and its second-order series expansion
around \(\lambda_{0}=1\).
The figure furthermore demonstrates the accuracy
of obtaining
estimates
of
\(\hat{\lambda}\) using one step of the Newton-Raphson method.

\begin{figure}[h!]
  \centering
  \begin{minipage}[t]{0.60\textwidth}
    \begin{subfigure}[b]{0.32\linewidth}
	  \includegraphics[width=\linewidth]{./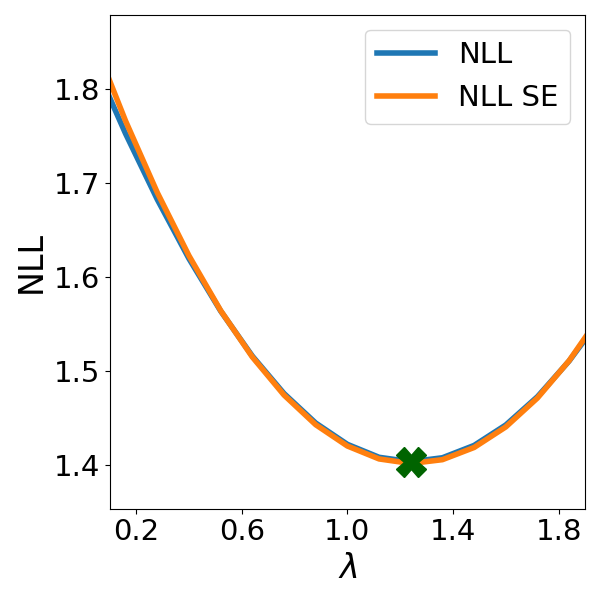}
    \end{subfigure}
    \begin{subfigure}[b]{0.32\linewidth}
      \includegraphics[width=\linewidth]{./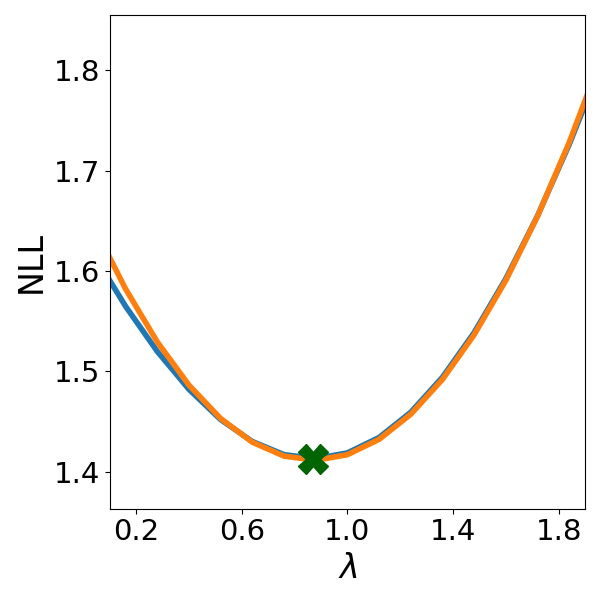}
    \end{subfigure}
    \begin{subfigure}[b]{0.32\linewidth}
      \includegraphics[width=\linewidth]{./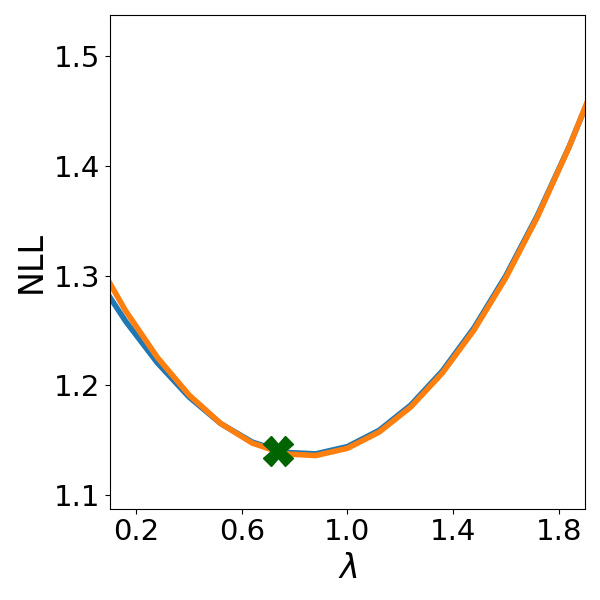}
    \end{subfigure}
  \end{minipage}
  \begin{minipage}[t]{0.60\textwidth}
    \begin{subfigure}[b]{0.32\linewidth}
      \includegraphics[width=\linewidth]{./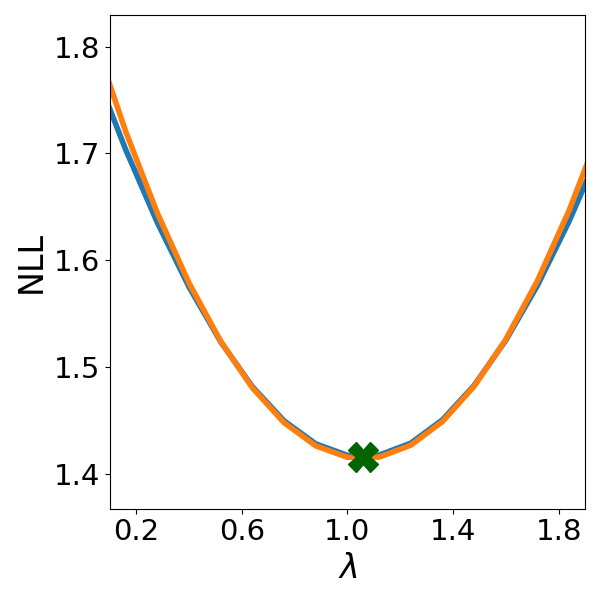}
    \end{subfigure}
    \begin{subfigure}[b]{0.32\linewidth}
      \includegraphics[width=\linewidth]{./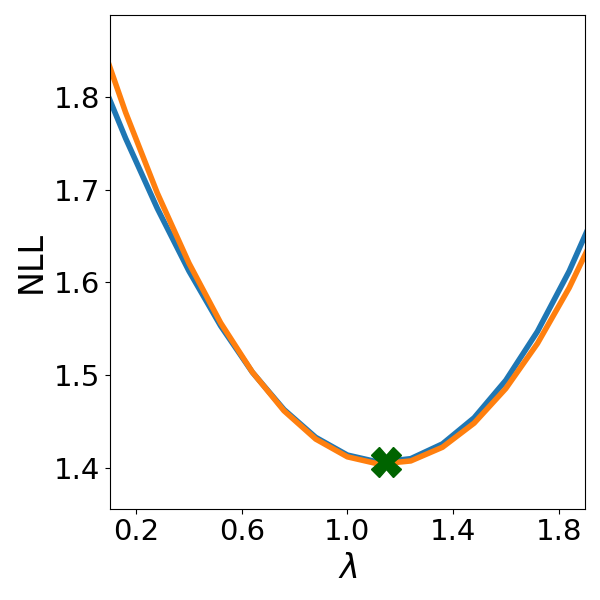}
    \end{subfigure}
    \begin{subfigure}[b]{0.32\linewidth}
      \includegraphics[width=\linewidth]{./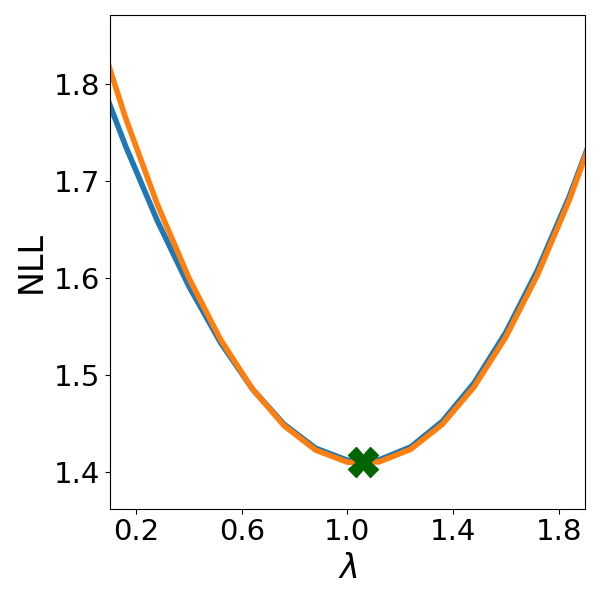}
    \end{subfigure}
  \end{minipage}
  \begin{minipage}[t]{0.60\textwidth}
    \begin{subfigure}[b]{0.32\linewidth}
      \includegraphics[width=\linewidth]{./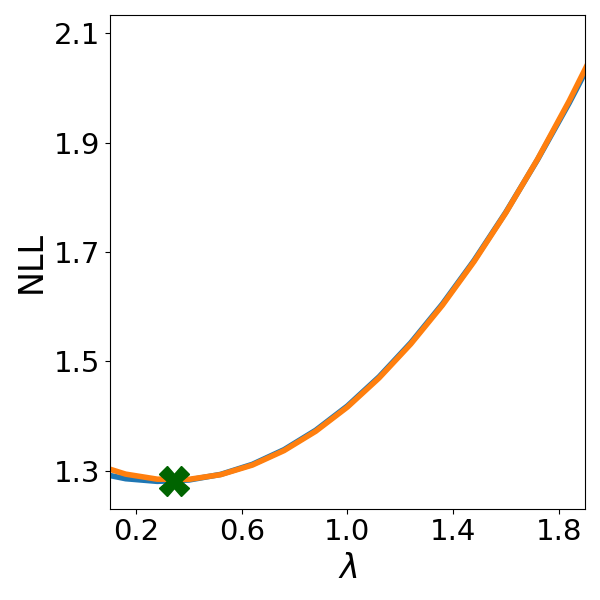}
    \end{subfigure}
    \begin{subfigure}[b]{0.32\linewidth}
      \includegraphics[width=\linewidth]{./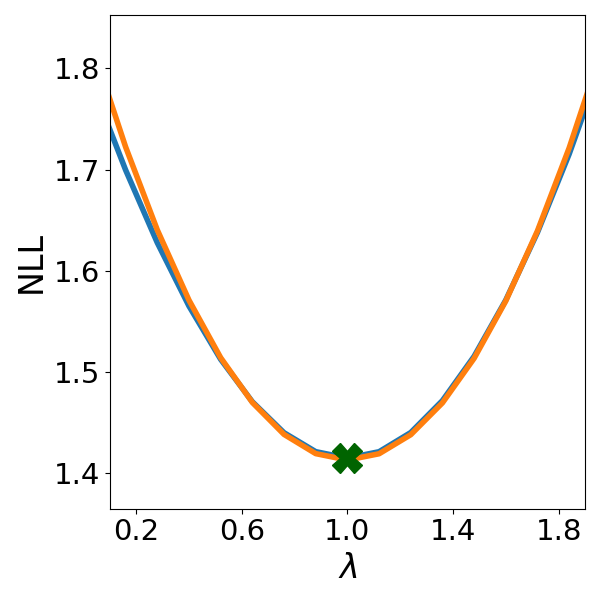}
    \end{subfigure}
    \begin{subfigure}[b]{0.32\linewidth}
      \includegraphics[width=\linewidth]{./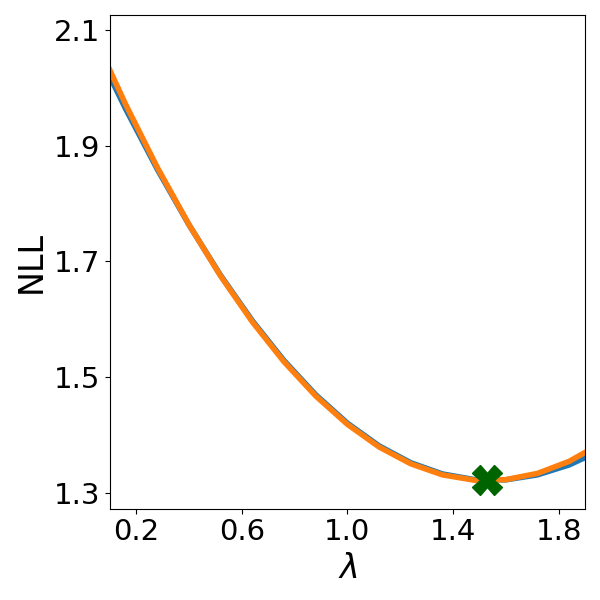}
    \end{subfigure}
    \vspace{-2.5mm}
    \caption{Normality normalization
    estimates for $\hat{\lambda}$ for a given training minibatch (ResNet18/CIFAR10). Left to right: increasing layer number. Top to bottom: estimates from various channels. Normality normalization's quadratic series expansion for the loss (NLL SE) closely approximates the original loss (NLL), leading to accurate estimates of $\hat{\lambda}$ (marked by $\times$).}
    \label{nll-plots}
  \end{minipage}
\end{figure}

\pdfbookmark[1]{Training Details}{bookmark-appendix-training-details}
\section{Training Details}
\label{appendix-training-details}

\pdfbookmark[2]{ResNet and WideResNet Experiments}{bookmark-appendix-resnet-experiments}
\subsection{ResNet and WideResNet Experiments}
\label{appendix-resnet-experiments}
The training configuration of the model and dataset combinations
which use batch normality normalization (BatchNormalNorm/BNN), batch normalization (BatchNorm/BN),
instance normality normalization (InstanceNormalNorm/INN), instance normalization (InstanceNorm/IN),
group normality normalization (GroupNormalNorm/GNN), group normalization (GroupNorm/GN), decorrelated batch normality normalization (DBNN), and decorrelated batch normalization (DBN),
are as follows.

We used a variety of residual network (ResNet) \citep{7780459}
and wide residual network (WideResNet) \citep{Zagoruyko2016WRN} architectures in our experiments.
For all
of the
experiments except
those
using
the
TinyImageNet (TinyIN),
Caltech101,
and
Food101
datasets,
models were trained from random initialization
for \(200\) epochs,
with a
factor of \(10\) reduction in learning rate at each \(60\)-epoch interval.
For the experiments using the TinyImageNet (TinyIN), Caltech101, and Food101 datasets,
models were trained from random initialization for \(100\) epochs,
with a
factor of \(10\) reduction in learning rate at epochs \(40, 70, 90\).
A group size of $32$ was used in all of the relevant group normalization experiments.
For the Caltech101 dataset, each run used a random $90$/$10$\% allocation to obtain the training and validation splits respectively.\footnote{The official Caltech101 dataset does not come with
its own
training/validation split.}
Each such run used its own unique random seed to
generate
the splits for that run,
which facilitates
greater precision in the reporting of our aggregate results across the
runs.

In all of our experiments involving the ResNet18, ResNet34, and WideResNet architectures, stochastic gradient descent (SGD) with learning rate \(0.1\), weight decay
\(5 \times 10^{-4}\),
momentum \(0.9\),
and minibatch size $128$
was used.
In the experiments involving the ResNet50 architecture
on the Caltech101 and Food101 datasets,
SGD with learning rate $0.0125$, weight decay
$1 \times 10^{-4}$,
momentum $0.9$,
and minibatch size $32$ was used.
A noise factor of $\xi=0.4$,
was used,
as preliminary
experiments
demonstrated
increases typically
resulted in
training
instability.
We also investigated several hyperparameter configurations, including for the learning rate, learning rate scheduler, weight decay, and minibatch size, across all the models
and found the
present
configurations to generally work best
across all of them.

\pdfbookmark[2]{Vision Transformer Experiments}{bookmark-appendix-vit-experiments}
\subsection{Vision Transformer Experiments}
\label{appendix-vit-experiments}
The training configuration of the model and dataset combinations
which use layer normality normalization (LayerNormalNorm/LNN) and layer normalization (LayerNorm/LN)
are as follows.
We used a vision transformer \citep{NIPS20173f5ee243, dosovitskiy2021an} model
consisting of $8$ transformer layers, $8$ attention heads, hidden dimension size of $768$, and multi-layer perceptron (MLP) dimension size of $2304$.
A patch size of $4$ was used throughout, except
in
the Food101
and ImageNet
experiments,
where it was set to $16$.

For all
of the
experiments except those using the SVHN dataset,
a learning rate warm-up strategy was employed, where the learning rate was linearly increased from a $0.1$-th fraction of its value, to the full learning rate. After the warm-up phase, a cyclic learning rate schedule based on cosine annealing with periodic restarts was employed \citep{loshchilov2017sgdr}, with $50$ iterations until the first restart, a factor of $2$ for increasing the number of epochs between two warm restarts,
and
a minimal admissible learning rate of $1 \times 10^{-6}$.
Models were trained using a minibatch size of $128$,
and for
$900$ epochs on the CIFAR10, CIFAR100, Food101 datasets, and for $200$ epochs on the ImageNet dataset.
For the ImageNet experiments, we applied weighted random sampling to sample training examples based on the training set's corresponding inverse class frequency for the data point; we found this to help across all
the model configurations used.
For the experiments involving the SVHN dataset,
models were trained from random initialization
for $200$ epochs, with a factor of $10$ reduction in learning rate at each $60$-epoch interval, and a minibatch size of $32$.

The AdamW optimizer \citep{KingBa15, loshchilov2018decoupled} with learning rate \(1 \times 10^{-3}\), weight decay \(5 \times 10^{-2}\),  \(\left(\beta_{1},\beta_{2}\right) = \left(0.9, 0.999\right)\), \(\epsilon=1 \times 10^{-8}\)
was used.
A noise factor of $\xi=1.0$
was used,
as preliminary
experiments
demonstrated
increases typically
resulted in
training
instability.
We also investigated several hyperparameter configurations,
including for the learning rate, learning rate scheduler, weight decay, and minibatch size, across all the models
and found the
present
configurations to generally work best
across all of them.

The data augmentations used for the models presented in Table
\ref{results-lnn}
are as follows.
For the models trained on the SVHN dataset, mild random translations and rotations were used. For the models trained on the CIFAR10 and CIFAR100 datasets, random cropping, random horizontal flips,
mild color jitters, and mixup \citep{zhang2018mixup} were used.
For the models trained on the Food101 dataset,
random cropping with resizing, random horizontal flips, moderate color jitters, and mixup were used.
For the models trained on the ImageNet dataset,
random cropping with resizing, random horizontal flips, and moderate color jitters were used.

\pdfbookmark[2]{Datasets and Frameworks}{bookmark-appendix-datasets-frameworks}
\subsection{Datasets and Frameworks}
\label{appendix-datasets-frameworks}
The datasets we used were
CIFAR10, CIFAR100 \citep{krizhevsky2009learning},
STL10 \citep{pmlr-v15-coates11a},
SVHN \citep{netzer2011reading},
Caltech101 \citep{liandreetoranzatoperona2022},
TinyImageNet \citep{le2015tinyimagenet},
Food101 \citep{bossard14},
and ImageNet \citep{5206848}.
We trained our models using the PyTorch \citep{paszke2019pytorch} machine learning framework.

\end{document}